
\newcommand{\arxiv}[1]{#1}

\typeout{IJCAI--23 Instructions for Authors}


\documentclass{article}
\pdfpagewidth=8.5in
\pdfpageheight=11in

\usepackage{ijcai23}

\hyphenation{analysis onemax Doerr parameter leadingones Hoeffding develop-ment Krejca Carola oneminmax leadingones-trailingzeroes countingones-countingzeroes NSGA}

\usepackage{times}
\usepackage{xspace}
\usepackage{soul}
\usepackage{url}
\usepackage[hidelinks]{hyperref}
\usepackage[utf8]{inputenc}
\usepackage[small]{caption}
\usepackage{graphicx}
\usepackage{amsmath}
\usepackage{amsthm}
\usepackage{booktabs}
\usepackage{algorithm}
\usepackage{algorithmic}
\usepackage[switch]{lineno}

\usepackage{amsxtra, amsfonts, amssymb, amstext, mathtools}
\usepackage{tikz}
\usetikzlibrary{arrows}
\usepackage{xcolor}
\usepackage{tikz}
\usepackage{pgfplots}
\pgfplotsset{compat=1.15}
\usepackage{booktabs}
\usepackage{nicefrac}
\usepackage{xspace}
\usepackage{url}\urlstyle{rm}
\usepackage{graphics,color}
\usepackage[algo2e,ruled,vlined,linesnumbered]{algorithm2e}
    \SetKwInOut{Input}{Input}
    \SetKwInOut{Output}{Output}
    \ResetInOut{output}
    \SetKw{Forever}{forever}
\usepackage{wrapfig}

\newtheorem{theorem}{Theorem}
\newtheorem{lemma}[theorem]{Lemma}
\newtheorem{corollary}[theorem]{Corollary}
\newtheorem{definition}[theorem]{Definition}

\newcommand{\oea}{\mbox{${(1 + 1)}$~EA}\xspace}

\newcommand{\oplea}{\mbox{${(1+\lambda)}$~EA}\xspace}

\newcommand{\mplea}{\mbox{${(\mu+\lambda)}$~EA}\xspace}

\newcommand{\opllga}{\mbox{${(1+(\lambda,\lambda))}$~GA}\xspace}
\newcommand{\ollga}{\opllga}
\newcommand{\NSGA}{\mbox{NSGA}\nobreakdash-II\xspace}

\newcommand{\onemax}{\textsc{OneMax}\xspace}
\newcommand{\LO}{\textsc{Leading\-Ones}\xspace}
\newcommand{\leadingones}{\LO}

\newcommand{\jump}{\textsc{Jump}\xspace}
\newcommand{\oneminmax}{\textsc{OneMinMax}\xspace}

\newcommand{\N}{\ensuremath{\mathbb{N}}} 




\definecolor{cqcqcq}{rgb}{0.7529411764705882,0.7529411764705882,0.7529411764705882}
\definecolor{aqaqaq}{rgb}{0.6274509803921569,0.6274509803921569,0.6274509803921569}
\definecolor{ffqqqq}{rgb}{1,0,0}
\definecolor{rvwvcq}{rgb}{0.08235294117647059,0.396078431372549,0.7529411764705882}

\newcommand{\wmin}{w_{\min}}
\newcommand{\wmax}{w_{\max}}
\newcommand{\convF}{conv($F$) }
\DeclareMathOperator{\ub}{ub}
\DeclareMathOperator{\pareto}{pareto}
\newcommand{\Spareto}{S_{\pareto}}

\let\originalleft\left
\let\originalright\right
\renewcommand{\left}{\mathopen{}\mathclose\bgroup\originalleft}
\renewcommand{\right}{\aftergroup\egroup\originalright}


\urlstyle{same}

\usepackage{latexsym}





\pdfinfo{
/TemplateVersion (IJCAI.2023.0)
}

\title{The First Proven Performance Guarantees for the \\
Non-Dominated Sorting Genetic Algorithm II (\NSGA) \\
on a Combinatorial Optimization Problem\arxiv{\thanks{Author-generated version of a paper appearing in the proceedings of IJCAI 2023.}}}

 \author{
 Sacha Cerf\/$^1$\and
 Benjamin Doerr$^{2}$\and
 Benjamin Hebras$^1$\and \\
 Yakob Kahane$^1$\and
 Simon Wietheger$^3$
 \affiliations
$^1$\'Ecole Polytechnique, Institut Polytechnique de Paris, Palaiseau, France\\
$^2$Laboratoire d'Informatique (LIX), CNRS, \'Ecole Polytechnique, Institut Polytechnique de Paris, Palaiseau, France\\
$^3$Hasso Plattner Institute, University of Potsdam, Germany\footnote{Work done while visiting \'Ecole Poytechnique, France.}
 \emails 
 sacha.cerf@polytechnique.edu, 
 benjamin.doerr@polytechnique.edu, 
 benjamin.hebras@polytechnique.edu, 
 yakob.kahane@polytechnique.edu, 
 simon.wietheger@student.hpi.de
}

\begin{document}

\maketitle

\begin{abstract}
 The Non-dominated Sorting Genetic Algorithm-II (NSGA-II) is one of the most prominent algorithms to solve multi-objective optimization problems. Recently, the first mathematical runtime guarantees have been obtained for this algorithm, however only for synthetic benchmark problems. 
 
 In this work, we give the first proven performance guarantees for a classic optimization problem, the NP-complete bi-objective minimum spanning tree problem. More specifically, we show that the NSGA-II with population size $N \ge 4((n-1) w_{\max} + 1)$ computes all extremal points of the Pareto front in an expected number of $O(m^2 n w_{\max} \log(n w_{\max}))$ iterations, where $n$ is the number of vertices, $m$ the number of edges, and $w_{\max}$ is the maximum edge weight in the problem instance. This result confirms, via mathematical means, the good performance of the NSGA-II observed empirically. It also shows that mathematical analyses of this algorithm are not only possible for synthetic benchmark problems, but also for more complex combinatorial optimization problems. 
  
  As a side result, we also obtain a new analysis of the performance of the  global SEMO algorithm on the bi-objective minimum spanning tree problem, which improves the previous best result by a factor of $|F|$, the number of extremal points of the Pareto front, a set that can be as large as $n w_{\max}$. The main reason for this improvement is our observation that both multi-objective evolutionary algorithms find the different extremal points in parallel rather than sequentially, as assumed in the previous proofs.
\end{abstract}

\section{Introduction}

Many optimization problems consist of several conflicting objectives. In such a situation, it is not possible to compute a single optimal solution. The most common solution concept therefore is to compute a set of Pareto optima (solutions which cannot be improved in one objective without accepting a worsening in another objective) and let a decision maker select the final solution based on their preference. 

Besides mathematical programming approaches, evolutionary algorithms (EAs) are the standard approach to multi-objective problems with many successful applications~\cite{ZhouQLZSZ11}. EAs profit here from their general ability to work with sets of solutions (``populations''). The by far most prominent multi-objective evolutionary algorithm (MOEA) is the \emph{Non-Dominated Sorting Genetic Algorithm~II} (\NSGA) proposed by Deb, Pratap, Agarwal, and Meyarivan~\cite{DebPAM02} (with over 50,000 citations on Google Scholar). 

While very successful in practice, this algorithm is only little understood from a fundamental perspective, giving the users little general advice on how to optimally employ this algorithm, e.g., how to set its parameters right. In fact, it was only at AAAI 2022 that the first mathematical runtime analysis of the \NSGA was presented~\cite{ZhengLD22}, a work that was quickly followed up more runtime analyses on this algorithm (see the previous works section). All these works analyze the performance of the \NSGA on simple benchmark problems, mostly multi-objective variants of the \onemax, \leadingones, and \jump benchmarks well-studied in the theory of single-objective randomized search heuristics~\cite{NeumannW10,AugerD11,Jansen13,ZhouYQ19,DoerrN20}. 

In this work, we conduct the first mathematical runtime analysis of the \NSGA on a classic combinatorial problem, namely the NP-complete bi-objective minimum spanning tree problem. In this problem, we are given an undirected graph with $n$ vertices and $m$ edges. In the basic single-objective version of the minimum spanning tree (MST) problem, we are also given non-negative integral edge weights and the task is to compute a minimum spanning tree. This problem is easily solved by classic algorithms. It has also been used to understand how EAs solve combinatorial optimization problems. As a first result in this direction, Neumann and Wegener~\cite{NeumannW07} showed that the \oea computes a minimum spanning tree in an expected number of $O(m^2 \log(nw_{\max}))$ iterations (and fitness evaluations). Here $w_{\max}$ denotes the maximum edge weight. Using a balanced mutation operator, this can be improved to $O(mn \log(nw_{\max}))$.

In the bi-objective variant of the problem, we are given two weight functions and the target is to compute the Pareto front of the problem of computing a spanning tree minimizing both weight functions. This problem is NP-complete, but it is possible to compute in polynomial time the extremal points of the Pareto front~\cite{HamacherR94}. The first result on how MOEAs solve this problem is \cite{Neumann07}. It shows that the  global SEMO algorithm, a multi-objective analogue of the basic \oea, computes the extremal points of the Pareto front in an expected number of $O(m^2nw_{\min}(|F|+\log(n\wmax)))$ iterations, where $F$ denotes the set of extremal points of the Pareto front and $w_{\min}$ denotes the minimum of the maximum edge weights of the two weight functions. As in \cite{NeumannW07}, this guarantee improves by a factor of $\Omega(m/n)$ when using balanced mutation.  

We note that \cite{NeumannW22} propose the convex global SEMO algorithm and show that it can solve the bi-objective MST problem in polynomial time regardless of $w_{\max}$. Given that this algorithm is very new and not yet established, we do not follow this line of research. \cite[Theorem~4]{NeumannW22} also implies a runtime bound for the classic global SEMO algorithm, but this becomes superior to ours only when the size of the Pareto front is at most $nw_{\min} / \ell$, where $\ell$ is a problem parameter which in general can only be estimated by $\Theta(m^2)$. So this result appears to be an improvement only in special cases. 

In this work, we conduct a mathematical runtime analysis of the \NSGA on the bi-objective MST problem. This is the first runtime analysis of this algorithm on a combinatorial problem. Besides showing that such analyses are possible, it proves that also the \NSGA can efficiently compute the extremal points of the Pareto front, and this in an expected number of $O(m^2 \log(n w_{\max}))$ iterations, hence a number of $O(N m^2 \log(n w_{\max}))$ fitness evaluations, when the population size $N$ is at least $N \ge 4n w_{\min}$. This results holds for various ways of generating the offspring population including the use of crossover. As in the previous works, we obtain a bound lower by a factor of $\Omega(m/n)$ when using balanced mutation.  

We note that our runtime guarantees are smaller than the ones proven in~\cite{Neumann07} for the global SEMO by essentially a factor of $|F|$, the number of extremal points, for which the only general upper bound is $n w_{\min}$. This improvement stems from our observation that the \NSGA makes progress towards the different extremal points in parallel, whereas the proof in~\cite{Neumann07} assumed that these were found sequentially. We show the same improvement for the GSEMO, lowering Neumann's bound to $O(m^2nw_{\min} \log(n w_{\max}))$.

\section{Previous Work}\label{sec:previous}

The multi-objective version of the minimum spanning tree problem, usually called multi-criteria minimum spanning tree problem (mc-MST), is an important combinatorial optimization problem with many applications in network design. We refer to Ehrgott~\cite{Ehrgott05} for an extensive discussion of the problem and the different algorithmic approaches to it. Being NP-complete, many heuristic approaches have been developed~\cite{ArroyoVV08}, including many based on evolutionary algorithms~\cite{KnowlesC00,KnowlesC01,BossekG17,ParragaDI17,MajumderKKP20}.

In this work, we investigate how the \NSGA solves the bi-objective MST problem. As in most theoretical works on randomized search heuristics, our aim is not so much finding the best possible algorithm to solve this problem (for this problem-specific algorithms will usually be superior), but we aim at understanding how a certain algorithm, here the \NSGA, solves a certain problem. The broader aim is to understand which algorithms are suitable for which problems, what are the right parameter settings, and to detect possible short-comings and remedies for these. We refer to the  textbooks~\cite{AugerD11} for a broader introduction into the research field of mathematical analyses of randomized search heuristics and its achievements.

We brief{}ly review the most relevant literature. The mathematical runtime analysis of EAs was started in the 1990s with analysis how very simple EAs such as the \oea optimize very simple benchmark problems such as \onemax or \leadingones~\cite{Muhlenbein92,Back93,Rudolph97,DrosteJW02}. The first runtime analyses of MOEAs followed a similar approach, estimating the runtime of multi-objective analogues of the \oea such as the SEMO or GSEMO on bi-objective analogues of \onemax and \leadingones~\cite{LaumannsTDZ02,Giel03,Thierens03}. 
The analyses of single-objective EAs quickly progressed towards more complex problems such as shortest paths, maximum matchings, the partition problem, the MST problem, and many others~\cite{NeumannW10}, 
or more complex algorithms such as the \oplea~\cite{JansenJW05}, \mplea~\cite{Witt06}, \mplea~\cite{AntipovD21algo}, \ollga~\cite{DoerrDE15}, 
and non-elitist algorithms~\cite{DangEL21aaai}.

In contrast, due to the more difficult population dynamics of MOEAs, the progress was slower in multi-objective evolutionary computation (there is, however, a highly successful line of research on multi-objectivization, that is, solving a single-objective problem via MOEAs~\cite{NeumannW06emo,FriedrichHHNW10,
QianYTYZ19,Crawford19}; this line of work is not very related to the research conducted in this work, though). 
There are only few mathematical results on simple MOEAs solving  combinatorial optimization problems (to the best of our knowledge only for the MST~\cite{NeumannW07,NeumannW22}, shortest path problems~\cite{Horoba09,NeumannT10}, and the travelling salesman problem~\cite{LaiZ20}). Also, only relatively few results analyze more complex algorithms such as the $(\mu+1)$ SIBEA~\cite{BrockhoffFN08,NguyenSN15,DoerrGN16}, the MOEA/D~\cite{LiZZZ16,HuangZCH19,HuangZ20}, the $(1+(\lambda,\lambda))$ GSEMO~\cite{DoerrHP22}, the \NSGA \cite{ZhengLD22,ZhengD22gecco,ZhengD22arxivmany,BianQ22,DoerrQ23tec,DoerrQ23LB,DoerrQ23crossover,DangOSS23aaai,DangOSS23gecco}, the NSGA-III \cite{WiethegerD23}, and the SMS-EMOA \cite{BianZLQ23}. Very roughly speaking, the works on the \NSGA show that this algorithm can find the Pareto front of simple bi-objective benchmark problems when the population size is chosen large enough (typically by a constant factor larger than the Pareto front). When the population size is only equal to the size of the Pareto front or when the number of objective is more than two, already on the simple \oneminmax benchmark the \NSGA needs exponential time to find the full Pareto front. So far, no mathematical runtime analysis of the \NSGA on a combinatorial optimization problem exists. 

\section{Preliminaries: Basic Notation}\label{sec:prelims}
The Multi-Objective Minimum Spanning Tree problem is stated as follows. Given an input connected graph $G = (V, E)$, and a weight function $w \colon E \longrightarrow \mathbb{N}^d$ on the edges of $G$, define the weight of any subgraph $H$ of $G$ denoted $w(H) \in \mathbb{N}^d$ as the sum of the weights of all edges present in $H$. We want to find all possible "optimum" subtree weight values $w \in \mathbb{N}^d$, in the sense that no subtree of $G$ has a weight value for which all coordinates are smaller that those of $w$, and there exists a spanning tree that has weight $w$.
Here, we focus on the case where $d = 2$.
The search space is the set of all subgraphs and is represented with $S = \{0, 1\}^m$ as a subgraph is a choice of edges.
Let $w = (w_1,w_2) \colon E \rightarrow \N^2$ be the weight function.
For a search point $s$, we refer to its weight as the geometric point of $\N^2$ denoted by $p_s = w(s)$.
We further define 
\begin{itemize}
    \item $w_{i}^{\max} = \max \{w_i(e), e \in E\}$, for $i \in \{1, 2\}$, 
    \item $w_{\max} = \max_{i\in \{1, 2\}} w_i^{\max}$
    \item $w_{\min} = \min_{i \in \{1, 2\}} w_i^{\max}$
    \item $w_{\ub} = n^2w_{\max}$.
\end{itemize} 

As in \cite{Neumann07}, the fitness of an individual $s \in S$ is given by a vector $f(s) = (f_1 (s), f_2(s))$ with 
\[f_i(s) = (c(s)-1)w_{\ub}^2  + (e(s)-(n-1))w_{\ub} + \sum_{j|s_j=1} w_i(j) \]
for $i\in \{1,2\}$, and where $w_i(j)$ is the weight of edge $e_j$ with respect to the function $w_i$. $c(s)$ is the number of connected components in the graph described by $s$, and $e(s)$ is the number of edges in this same graph. Note that for a spanning tree $s$, $f_i(s) =  \sum_{j|s_j=1} w_i(j)$.

\begin{definition}[Domination]
    For $s,s' \in S$, we say that $s$ dominates $s'$ and we note $s \preceq s'$ if $s$ Pareto-dominates $s'$ according to the fitness functions $f_1$ and $f_2$, i.e. if $f_1(s) \leq f_1(s')$ and $f_2(s) \leq f_2(s')$. We say that $s$ strictly dominates $s'$ and we note $s \prec s'$ if $s \preceq s'$ and $s$ and $s'$ have different fitness values. 
\end{definition}

Since the relation of domination only depends on the objective value, we also use "domination" to compare objective values rather than individuals.

\begin{definition}[Pareto optimality]
$s \in S$ is called \emph{Pareto optimal} if there is no search
point $s' \in S$ that strictly dominates $s$. The set of all
Pareto optimal search points $\Spareto$ is called the
Pareto set. $F = f(\Spareto)$ is the set of all Pareto optimal
objective vectors and is called the Pareto front.
\end{definition}

The goal is to find for each $q \in F$ of the considered objective function $f$ an object $s \in \Spareto$ with
$f(s) = q$. 
From now on, as in \cite{Neumann07}, we denote by \convF the lower-left part of the convex hull of $F$. The reader may find an illustration of \convF in appendix of the Arvix version of the paper.

\begin{definition}[Extremal points]
The \emph{extremal points} of the Pareto front $F$ are the vertices of the polygonal line forming \convF.
\end{definition}

Note that for each spanning tree $T$ on the convex hull there is a $\lambda \in [0, 1]$ such that T is a minimum spanning tree with respect to the single weight function $\lambda w_1 +
(1 - \lambda)w_2$ (see e.g. \cite{KnowlesC01}, \cite{Neumann07}).

Let $q_1, q_2, \ldots, q_r$ be the extremal points sorted in increasing $f_1$ value. Observe that $q_1$ (resp. $q_r$) realises by construction the minimum of $w_1$ (resp. $w_2$) in $w(S)$.

Those points are interesting because they give a solution which is a 2-approximation of the Pareto front (\cite{Neumann07}).

\subsection{Algorithms}
Here, we describe the \NSGA algorithm. Let $N$ be the population size and $n$ the chromosome size. We also give pseudocode for the global SEMO (GSEMO), since part of our results directly come from the study of this algorithm in \cite{NeumannW07}, and since most of these results also apply to this algorithm. 
\subsubsection{The \NSGA} \label{subsubseb:nsga2}

The \NSGA is an evolutionary algorithm, which means it maintains a population of $N \in \N$ solutions to an optimization problem, named \emph{individuals} (here, subgraphs of $G$). This population evolves over multiple generations. In each iteration, the algorithm generates an offspring population from selected parents in the current population, using some reproduction and mutation mechanisms. Note however that these mechanisms are not intrinsic to the \NSGA and have to be specified for the task at hand.
From the combined parent and offspring population, the algorithm decides which individuals to keep according to their fitness. 
The \NSGA is centered around two notions. 
First, \emph{rank}, which defines how good an individual is in the current population, and second, \emph{crowding distance}, which quantifies how much diversity it brings to the population.
\begin{definition}[Rank]
    Let $X$ be a population. We recursively define the rank of every individual in $X$. An individual that is not strictly dominated by any individual in $X$ has rank $1$. Then, if ranks $1, \ldots, k$ are defined, an individual that is not strictly dominated by any individual of $X$, except those of rank $1, \ldots, k$, and that is dominated by at least one individual of rank $k$, has rank $k + 1$.
\end{definition}

The ranks of individuals in a given population  $X$ can be computed in quadratic time using the fast-non-dominated-sort algorithm. The reader will find in appendix of the Arvix version of the paper the associated pseudocode.

We now introduce the notion of crowding distance.

\begin{definition}[Crowding distance]
    Let $X$ be a finite set, and $f \colon X \to \mathbb{R}$. Let $x_1, \ldots, x_N$ be the elements of $X$, sorted in increasing order of $f$ values.
    Then, the crowding distance of each $x_i$ is defined as 
    \begin{equation}
  cDis(x_i) =
    \begin{cases}
      +\infty & \text{if $i = 1$ or $i = n$}\\
      \frac{f(x_{i+1}) - f(x_i)}{f(x_l) - f(x_1)}  & \text{otherwise.}\\
    \end{cases}
\end{equation}
\end{definition}

Note that in our case, we are dealing with multiple fitness functions ($f_1,f_2$). The crowding distance of an element is then the sum of the crowding distances of this element for each function. Computing the crowding distance of each individual in a population of size $N$ can be done naively with two sortings and $O(N)$ subtractions and divisions. It can then be done in $O(N\log(N))$, which is negligible compared to fast-non-dominated sort.

The pseudocode for \NSGA can be found in Algorithm~\ref{alg:nsgaII}.

\begin{algorithm2e}[t]
 \caption{The NSGA-II.}
 \label{alg:nsgaII}
Generate initial population $P \in (\{0, 1\}^m)^N$\\
\Repeat{\Forever}{
    Generate offspring population $Q \in (\{0, 1\}^m)^N$\\
    Let $R = P \cup Q$\\
    Sort $R$ with fast-non-dominated-sort to get the sets $F_i, i \in \N$ of the individuals of rank $i$\\
    Find $i_{cut} = \max \{i \mid \sum_{k=0}^{i-1}|F_k| < N\}$\\ 
    Calculate crowding distance of each individual in $F_{i_{cut}}$\\
    Let $ \widetilde{F_{i_{cut}}}$ be the $N - \sum_{k=0}^{i_{cut}-1}|F_k|$ individuals in $F_{i_{cut}}$ with largest crowding distance, chosen at random in case of a tie\\
    $P = (\bigcup_{i=0}^{i_{cut} - 1}F_i) \cup \widetilde{F_{i_{cut}}}$
}
\end{algorithm2e}

\subsubsection{The Global Simple Evolutionary Multiobjective Optimizer (GSEMO)} GSEMO is also an evolutionary algorithm, studied on the mc-MST problem in \cite{Neumann07} and \cite{NeumannW07}. Its functioning is simpler than that of the \NSGA. 
It generates one individual $s$ at each generation, adding it to the population if it is not dominated by any other individual, and removes those that are dominated by $s$. A pseudocode for GSEMO, as described in \cite{Neumann07} can be found in Algorithm~\ref{alg:gsemo}.

\begin{algorithm2e}[t]
\caption{GSEMO}
\label{alg:gsemo}
Generate initial population $P$, which consists of a unique individual $s \in \{0, 1\}^n$, chosen randomly.\\
\Repeat{\Forever}{
    Choose a random $s \in P$\\
    Generate an offspring $s'$ from x, flipping each bit with probability $\frac{1}{n}$.\\
    \If{no individual in $P$ dominates $s'$}{
        Add $s'$ to $P$\\
        Remove all individuals in $P$ that $s'$ dominates
    }
}
\end{algorithm2e}

\section{Analysis of the GSEMO and the \NSGA on the Bi-Objective Minimum Spanning Tree Problem} \label{sec:runtime}
In this section, we prove two main results on the expected runtime of the GSEMO and the \NSGA on the Bi-Objective Minimum Spanning Tree Problem.

To state our theorem on the \NSGA in the strongest possible form (which will enable us to easily obtain results for several different algorithm variants in Section~\ref{sec:applications}), we need to introduce two parameters which depend on the offspring generation mechanism which the \NSGA works with.

For any population $P$ containing no spanning tree, any $s \in F_1$ and any position $i \in \{1, \dots, n \}$ of a bit of value $0$, let $\mathbf{p}_1$ be a lower bound (that does not depend on $P, s, i$ and $j$) of the probability that there exists in the offspring a child generated with $s$ as (one of) the parent(s), that differs from $s$ on exactly bit $i$.

For any population $P$ containing at least a spanning tree, any $s \in F_1$, and any pair of bits of different value $i, j \in \{1, \dots, n \}$, let $\mathbf{p}_2$ be a lower bound (that does not depend on $P, s, i$ and $j$) of the probability that there exists in the offspring a child generated with $s$ as (one of) the parent(s), that differs from $s$ on exactly bits $i, j$.

\begin{theorem} \label{thm:main_nsga2}
The expected number of generations until the \NSGA, working on the fitness function $f$ with a population of size $N \ge 4((n-1)\wmin + 1)$ and with any offspring generation mechanism  resulting in $\mathbf{p}_1$ and $\mathbf{p}_2$, constructs a population which includes a spanning tree for each extremal vector of \convF is upper bounded by $O(\frac{\log n}{\mathbf{p}_1} + \frac{\log (n\wmax) }{\mathbf{p}_2}).$
\end{theorem}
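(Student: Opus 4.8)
The plan is to precede the two phases with a preservation lemma and then run a multiplicative-drift argument in each phase. First I would establish the \textbf{preservation lemma}: with $N \ge 4((n-1)\wmin+1)$, the \NSGA never loses a distinct rank-$1$ objective value once it is present. The trees in the rank-$1$ set $F_1$ are mutually non-dominated, so sorted by $f_1$ their $f_2$-values strictly decrease; since the smaller of the two coordinates ranges over an interval of length at most $(n-1)\wmin$, at most $(n-1)\wmin+1$ distinct rank-$1$ objective values can coexist. Adapting the crowding-distance survival analysis from the existing \NSGA theory, the factor $4$ guarantees that the crowding-distance selection retains at least one individual of every distinct rank-$1$ value: the extreme individuals always get infinite crowding distance, and a population four times the number of values leaves enough room that no value is evicted. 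Consequently, once any hull vector is realised by some $s \in F_1$, a tree with that value (or a dominating one) is present forever.

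\textbf{Phase 1} drives $P$ to contain a spanning tree. While none exists, let $D$ be the number of single-bit flips separating the penalty-minimal rank-$1$ individual $s$ from a tree, i.e. $D = e(s)-n+2c(s)-1$. Removing any cycle edge or, since $G$ is connected, adding any edge joining two components decreases $D$ by one, so there are $\Omega(D)$ improving single-bit flips, each realised in the offspring with probability at least $\mathbf{p}_1$. Because a single child performs such flips disjointly, the expected one-generation decrease of $D$ is $\Omega(\mathbf{p}_1 D)$, so $D$ follows multiplicative drift with rate $\Theta(\mathbf{p}_1)$. As $D = O(m)$ and $\log m = O(\log n)$, this reaches $D=0$ in expected $O(\log n/\mathbf{p}_1)$ generations; the resulting tree is rank-$1$ and, by the preservation lemma, stays.

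\textbf{Phase 2} starts once a tree is present and must cover every extremal vector. For each extremal point $q_j$ fix a weighting $\lambda_j$ making $q_j$ a single-weight MST, and let $g_j$ be the gap between the smallest $\lambda_j$-weight of a tree in $P$ and the $\lambda_j$-optimum. By the matroid exchange property the current $\lambda_j$-best tree can be turned into a $\lambda_j$-optimal one by edge swaps whose $\lambda_j$-improvements sum to exactly $g_j$; each such swap is a two-bit flip on bits of different value, realised with probability at least $\mathbf{p}_2$. Since these improvements sum to $g_j$, a short calculation (valid when $\mathbf{p}_2 = O(1/n)$) gives an expected one-generation decrease of $\Omega(\mathbf{p}_2 g_j)$ — crucially \emph{without} a $1/n$ loss, because one sums over all available swaps rather than relying on a single best swap. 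Hence each $g_j$ obeys multiplicative drift with rate $\Theta(\mathbf{p}_2)$, and as $\lambda_j$-weights range over $O(n\wmax)$, each direction is optimised in $O(\log(n\wmax)/\mathbf{p}_2)$ generations. The \emph{parallel} point is that the preservation lemma makes every $g_j$ non-increasing, so the run succeeds once the slowest of the $r \le (n-1)\wmin+1$ directions does; the concentration of multiplicative drift absorbs the resulting $\log r = O(\log(n\wmax))$ term, yielding $O(\log(n\wmax)/\mathbf{p}_2)$ for all directions \emph{simultaneously}. This is precisely what removes the factor $|F|$ of the earlier sequential analysis.

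I expect the \textbf{main obstacle} to be coupling the preservation lemma with the parallel drift. Two points are delicate. First, the crowding-distance coverage guarantee must be made exact: one has to show that the \NSGA selection, in the presence of arbitrarily many duplicates and of transient lower-rank individuals, really keeps one representative of every rank-$1$ hull value whenever $N \ge 4((n-1)\wmin+1)$. Second, the drift-rate claim $\Omega(\mathbf{p}_2 g_j)$, rather than $\Omega(\mathbf{p}_2 g_j/n)$, must be justified from the summed-improvement structure of the exchange under a smallness condition on $\mathbf{p}_2$; this is what keeps the final bound free of an extra $n$ factor and, together with the preservation lemma, lets the different directions share one drift clock. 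The remaining ingredients — the defect bound of Phase 1, the matroid exchange of Phase 2, and the two multiplicative-drift computations — are routine.
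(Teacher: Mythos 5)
Your overall architecture matches the paper's: the same preservation/elitism lemma (at most $4$ individuals with positive crowding distance per objective value, at most $(n-1)\wmin+1$ distinct values in an incomparable set containing a tree, hence the factor $4$ in $N$), a first phase reaching a spanning tree in $O(\log n/\mathbf{p}_1)$ generations, and a second phase driven by nonnegative linear potentials whose drift is $\Omega(\mathbf{p}_2\cdot\mathrm{gap})$ via the edge-exchange decomposition that the paper imports from Neumann--Wegener; this summed-improvement step is exactly what removes the factor $|F|$ in both proofs. Your Phase~2 differs only in how parallelism is harvested: you run one multiplicative-drift process per extremal direction and combine them with drift tail bounds plus a union bound over the $r \le (n-1)\wmin+1$ directions, absorbing the $\log r$ term, whereas the paper sums the $r$ potentials into a single potential $d(P)=\sum_i d_i(P)$ and applies the expectation-version drift theorem once, the polynomial bound on $d(P)$ again giving $O(\log(n\wmax)/\mathbf{p}_2)$. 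Both are sound; the paper's summed potential needs no tail bounds, while your route avoids the paper's geometric construction of the $d_i$ — but you must then choose each $\lambda_j$ strictly between the slopes of the two hull edges adjacent to $q_j$, so that $q_j$ is the \emph{unique} minimizing objective value and $g_j=0$ really certifies that $q_j$ is covered.

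The genuine gap is in your Phase~1. Your potential $D = e(s)-n+2c(s)-1$ is not non-increasing under \NSGA selection, because survival is governed by domination, which on the penalty terms is lexicographic in $(c,e)$, not monotone in $D$: an offspring that merges two components while adding, say, three edges has smaller $c$ and therefore strictly dominates every individual of the current first front, yet it raises $D$ by $+1$. Hence the first front can jump to a strictly larger $D$, and your drift estimate ``$\Omega(\mathbf{p}_1 D)$ because there are $\Omega(D)$ improving one-bit flips'' is not a valid lower bound on the drift of $D$: it ignores these $D$-increasing dominating offspring, whose frequency is \emph{not} controlled by $\mathbf{p}_1$ ($\mathbf{p}_1$ only lower-bounds the probability of clean single-bit children and says nothing about how often messy children occur). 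Since the theorem is claimed for \emph{any} offspring mechanism consistent with $\mathbf{p}_1,\mathbf{p}_2$, one can even exhibit a mechanism (e.g., one that with constant probability flips a component-joining edge together with five further $0$-bits) under which $D$ has positive expected change in sparse disconnected states, so your multiplicative-drift argument breaks while the theorem remains true. The paper avoids this by splitting Phase~1 into two stages and tracking quantities that \emph{are} monotone under domination — first the number of connected components of the first front (any dominating child has at most as many components), then, once connected, the number of edges — each stage being a harmonic-sum waiting-time argument yielding $O(\log n/\mathbf{p}_1)$. You should either adopt that sequential argument or add hypotheses on the mechanism that let you charge the $D$-increasing dominating offspring against the improving flips.
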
 

For the readers' convenience, we note already here that in typical versions of the \NSGA, we have $\mathbf{p}_1 = \Theta(1/m)$ and $\mathbf{p}_2 = \Theta(1/m^2)$, recall that the lenghth of the bit-string representation is $m$ for MST problems. Then the runtime bound above becomes $O(m^2 \log(n \wmax)$ generations. We refer to Section~\ref{sec:applications} for the details.

Our arguments developed in the proof of Theorem~\ref{thm:main_nsga2} easily yield the following runtime estimate for the GSEMO, which improves the previously known results asymptotically.

\begin{theorem}
The expected number of fitness evaluations until the GSEMO
working on the fitness function $f$ constructs
a population, which includes a spanning tree for
each extremal vector of $\convF$ is
upper bounded by $O(m^2n\wmin\log (n\wmax))$.\label{thm:main_gsemo}
\end{theorem}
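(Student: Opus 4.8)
The plan is to reduce the whole analysis to a collection of single-objective multiplicative-drift processes, one attached to each extremal point, and then to combine them by a union bound that costs only an \emph{additive} logarithmic term; this is exactly where the improvement over the sequential analysis of~\cite{Neumann07} comes from. I would begin with a structural observation about the population of the GSEMO. Since the penalty $(c(s)-1)w_{\ub}^2+(e(s)-(n-1))w_{\ub}$ is common to $f_1$ and $f_2$ and strictly dominates the weight contribution $\sum_{j\mid s_j=1}w_i(j)<w_{\ub}$, any search point with strictly fewer components, or with the same number of components but strictly fewer edges, strictly dominates the other. Hence every population arising in the GSEMO is an antichain in which all individuals share one common pair $(c,e)$. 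In particular, the moment a first spanning tree enters the population, every non-tree is dominated and removed, so from then on the population consists only of spanning trees; their objective values lie in $\{0,\dots,(n-1)w_1^{\max}\}\times\{0,\dots,(n-1)w_2^{\max}\}$, and an antichain there has size at most $(n-1)\wmin+1$. This uniform bound $|P|\le (n-1)\wmin+1$ is what ultimately produces the $n\wmin$ factor.

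Next I would bound the time to create the first spanning tree. Because all individuals share $(c,e)$, decreasing the common number of components by one only requires selecting any individual and flipping a single bit of an edge joining two of its components (there are at least $c-1$ such edges), an event of probability at least $(c-1)/(em)$; summing $em/(c-1)$ over $c=n,\dots,2$ gives $O(m\log n)$. An analogous step that removes one non-bridge edge reduces the edge count from $m$ to $n-1$ in a further $O(m\log n)$ expected steps. These bounds carry \emph{no} factor $|P|$, since any individual of the population is a valid starting point, so this phase is of strictly lower order than the claimed bound.

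For the second phase I would attach to each extremal point $q_k$ an integer direction $(\alpha_k,\beta_k)$ with nonnegative entries of size $O(n\wmax)$, taken in the interior of the normal cone of $\convF$ at $q_k$ (for the two endpoints $q_1,q_r$ a lexicographic tie-break such as $(n\wmax,1)$ and $(1,n\wmax)$ works), so that the single-objective weight $g_k(s)=\alpha_k\sum_j s_j w_1(j)+\beta_k\sum_j s_j w_2(j)$ is integral, has range $O(n^2\wmax^2)$ over spanning trees, and is uniquely minimized over spanning-tree objective values exactly at $q_k$. Writing $G_k=\min_{s\in P}g_k(s)$, domination forces $G_k$ to be non-increasing, and the swap lemma of~\cite{NeumannW07} shows that for the current $g_k$-best tree $s^\ast$ there is a family of edge-disjoint improving two-bit swaps whose total weight gain equals $g_k(s^\ast)-g_k(q_k)$. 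Selecting $s^\ast$ (probability $\ge 1/|P|$) and performing the mutation realizing any one such swap (probability $\ge 1/(em^2)$ each, and these events are mutually exclusive) yields, in conditional expectation, the multiplicative drift $\E[\,G_k-g_k(q_k)\,]\le\bigl(1-\tfrac{1}{em^2|P|}\bigr)\bigl(G_k-g_k(q_k)\bigr)$; and $G_k=g_k(q_k)$ means the population contains a tree with objective value $q_k$.

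Finally I would combine the $r=|F|$ drift processes that run \emph{on the same execution}. Each one, viewed marginally, is a multiplicative-drift process with rate $\delta\ge 1/(em^2((n-1)\wmin+1))$ and initial value $O(n^2\wmax^2)$, so the tail bound for multiplicative drift gives $\Pr[T_k>\delta^{-1}(\ln(n^2\wmax^2)+t)]\le e^{-t}$. A union bound over the $r\le n\wmin$ processes with $t=\ln r+s$ shows that \emph{all} extremal points are found within $\delta^{-1}(O(\log(n\wmax))+\ln r)$ steps except with probability $e^{-s}$; since $\ln r=O(\log(n\wmax))$ this term is absorbed, and taking expectations (and adding the lower-order first phase) gives the claimed $O(m^2 n\wmin\log(n\wmax))$. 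The hard part is precisely this last step: establishing that the $|F|$ extremal points are approached \emph{in parallel}. The key point is that one and the same GSEMO run drives every $G_k$ downward simultaneously, so the $|F|$ processes share their running time rather than accumulating it, and the union bound charges only an additive $O(\log|F|)$ inside the drift instead of the multiplicative factor $|F|$ paid by the sequential argument of~\cite{Neumann07}.
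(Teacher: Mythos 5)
Your proposal is correct, and its skeleton coincides with the paper's: an $O(m\log n)$ first phase to reach a spanning tree, the observation that the GSEMO population is an antichain of spanning trees of size at most $(n-1)\wmin+1$ (the source of the $n\wmin$ factor), one integral linear functional per extremal point that is minimized exactly there, and the swap lemma of \cite{NeumannW07} to get a per-point multiplicative drift of rate $\delta\ge 1/(em^2((n-1)\wmin+1))$. Where you genuinely diverge is the aggregation step that realizes the ``parallel'' improvement over \cite{Neumann07}. The paper sums the potentials into a single quantity $d(P)=\sum_{i=1}^r d_i(P)$, notes that by linearity the sum inherits the same multiplicative drift rate $\delta$, and applies the \emph{expectation} version of the multiplicative drift theorem \cite{DoerrJW12algo} once; since $\max_t d(P(t))\le r\cdot O(n^2\wmax^2)$ and $r\le(n-1)\wmin+1$, the logarithm of the summed potential is still $O(\log(n\wmax))$. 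You instead keep the $r$ processes separate, invoke the \emph{tail-bound} (concentration) version of multiplicative drift for each $G_k$, and take a union bound over $k$, absorbing the additive $\ln r=O(\log(n\wmax))$ term. Both are sound; the paper's route is more elementary (no concentration result needed, and no need to worry about the dependence structure beyond adaptedness), while yours buys a high-probability statement on the time to find \emph{all} extremal points for free, which the paper's expectation-only argument does not give. A further small point in your favor: your normal-cone directions with lexicographic tie-breaking at the endpoints vanish \emph{only} at $q_1$ and $q_r$, whereas the paper's endpoint functionals $d_1(p)=w_1(p)-w_1(q_1)$ and $d_r(p)=w_2(p)-w_2(q_r)$ vanish on an entire coordinate line, so your choice is actually the cleaner way to guarantee that $d(P)=0$ (respectively all $G_k$ hitting their minima) certifies a tree for each extremal \emph{vector}.
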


The proof is split into three parts. First, we  prove that the \NSGA and the GSEMO find spanning trees in negligible time. Secondly, we introduce an elitism property of the \NSGA from the moment when the population includes spanning trees. Then, we bound the time taken by both \NSGA and GSEMO to find the extremal points from this point.

\subsection{Sampling the First Spanning Tree} 

Regarding the first phase of the optimization process, the proof of \cite[Lemma 5]{Neumann07} bounds the expected time until the population of the GSEMO contains at least one spanning tree from above, giving the following lemma.
\begin{lemma} \label{lem:tree_time} 
The GSEMO working on the fitness function $f$ constructs a population with at least one spanning tree in expected time $O(m \log n)$. 
\end{lemma}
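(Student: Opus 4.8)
The plan is to exploit the tiered structure of the fitness function $f$: by the choice $w_{\ub}=n^2\wmax$ the weights are inflated so that domination respects a lexicographic priority on the triple (number of connected components $c(s)$, number of edges $e(s)$, true edge-weight sum). Indeed, for every search point $\sum_{j\mid s_j=1} w_i(j)\le m\wmax < n^2\wmax = w_{\ub}$, so the $(c(s)-1)w_{\ub}^2$ term outweighs the $(e(s)-(n-1))w_{\ub}$ term, which in turn outweighs the weight sum. From this I would first establish two domination facts holding \emph{simultaneously in both objectives}: if $c(s)<c(s')$ then $s\prec s'$, and if $c(s)=c(s')$ but $e(s)<e(s')$ then $s\prec s'$. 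An easy induction over the iterations then shows the key structural invariant: the GSEMO population is always an antichain sitting at a single level, i.e. all surviving individuals share a common component count $c$ and a common edge count $e$ and differ only in their weight vectors. Moreover this common $c$ is non-increasing over time, and once $c=1$ the common $e$ is non-increasing as well.

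With this in hand I would run a two-phase drift argument toward the spanning-tree level $(c,e)=(1,n-1)$. In Phase~1 the aim is to drive $c$ down to $1$. Since $G$ is connected, any individual with $c>1$ components has at least $c-1$ edges of $G$ joining two distinct components; each is a $0$-bit whose single-bit flip lowers $c$ by one. Hence, no matter which parent the GSEMO selects, the probability of generating such a child in one iteration is at least $(c-1)\frac1m(1-\frac1m)^{m-1}\ge\frac{c-1}{em}$, a bound independent of the population size. As the child has $c-1$ components it dominates the whole current population and is therefore always accepted. Summing the expected waiting times $\frac{em}{c-1}$ while $c$ decreases from at most $n$ to $2$ gives the harmonic bound $em\sum_{k=1}^{n-1}\frac1k=O(m\log n)$.

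Phase~2 is symmetric. Once $c=1$, a connected individual with $e>n-1$ edges has exactly $e-(n-1)$ redundant edges lying on cycles, each a $1$-bit whose removal keeps the graph connected while lowering $e$. The per-iteration success probability is then at least $\frac{e-(n-1)}{em}$, again uniform over the parent choice, and the resulting child (still connected, one edge fewer) dominates the entire population and is accepted. Writing $r=e-(n-1)$ and summing $\frac{em}{r}$ as $r$ ranges from $1$ up to at most $m-(n-1)$ gives $em\sum_{r=1}^{m}\frac1r=O(m\log m)=O(m\log n)$, where the last step uses $m\le\binom{n}{2}$ so that $\log m=O(\log n)$. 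Combining the two phases yields the claimed $O(m\log n)$ expected time.

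The step needing the most care is the interplay between population size and selection. A crude analysis pays a factor $|P|$ for selecting one specific useful parent, and bounding $|P|$ is hopeless here since a single-level antichain of weight vectors can have size $\Theta(m\wmax)$. The resolution, which is really the heart of the lemma, is the invariant above: in each phase \emph{every} member of the population is an equally good parent, so the $1/|P|$ selection loss is exactly offset by the $|P|$ good parents and the effective success probability is the size-free bound used above. The only remaining routine point is to verify the two domination facts, namely that the weight-sum term can never overturn a strict improvement in $c$ or in $e$; this is exactly what the inflation $w_{\ub}=n^2\wmax$ buys, and it follows from the crude estimates $m\wmax<w_{\ub}$ and $m<n^2$.
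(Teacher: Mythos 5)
Your proof is correct and follows essentially the same route as the paper's (adapted from Neumann's Lemma~5): the tiered fitness forces the whole population onto a single $(c,e)$ level, making every individual an equally good parent, and then two harmonic-sum phases (merging components, then deleting redundant edges) give $O(m\log n)$. One cosmetic remark: like the paper, you say there are \emph{exactly} $e-(n-1)$ removable edges in Phase~2, whereas a connected graph can have more non-bridge edges than its cyclomatic number; ``at least'' is what the argument needs, and that is all either proof actually uses.
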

We argue that a similar bound holds for the \NSGA.
\begin{lemma} \label{lem:tree_timeNSGA}
The \NSGA working on the fitness function $f$ constructs a population with at least one spanning tree in expected time $O(\frac{\log n}{\mathbf{p}_1})$. 
\end{lemma}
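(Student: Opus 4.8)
The plan is to transfer the multiplicative-drift argument underlying the GSEMO bound of Lemma~\ref{lem:tree_time} to the \NSGA. The only genuinely new ingredient is to exhibit a single individual whose progress towards a spanning tree the truncation step of the \NSGA is guaranteed to preserve.

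\textbf{An always-surviving individual.} First I would fix, in the current population, an individual $s^\star$ of minimum $f_1$-value, and set $c^\star := c(s^\star)$. Because the coefficient $w_{\ub}^2 = n^4 w_{\max}^2$ of the component term dominates the total of the edge term (at most $m\,w_{\ub}$) and the weight term (at most $m\,w_{\max}$), minimising $f_1$ forces $s^\star$ to realise the fewest components present in the population and, among those, the fewest edges. A minimiser of $f_1$ is not strictly dominated, hence lies in $F_1$; and when $F_1$ is sorted by $f_1$ to compute crowding distances, $s^\star$ is an endpoint and receives crowding distance $+\infty$. Thus $s^\star$ survives every truncation, whatever $i_{cut}$ is, so the population's minimum $f_1$-value, equivalently the lexicographic pair $(c^\star, e(s^\star))$, is non-increasing over the run.

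\textbf{A potential and its drift.} I would then run a multiplicative drift argument on $g := (c^\star - 1) + (e(s^\star) - n + c^\star)$, the number of surplus components plus the number of independent cycles of $s^\star$; here $g \ge 0$, $g = 0$ exactly when $s^\star$ is a spanning tree, and $g \le n + m$ for the initial $s^\star$. Two kinds of single-bit flips lower $g$ by one: adding an edge of $G$ joining two different components of $s^\star$ (a $0 \to 1$ flip; at least $c^\star - 1$ of them exist because $G$ is connected), and deleting an edge on a cycle of $s^\star$ (a $1 \to 0$ flip; at least $e(s^\star) - n + c^\star$ of them exist). Each such flip produces a child of strictly smaller $f_1$, so as soon as one appears in the offspring the tracked pair decreases and $g$ drops. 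As there are at least $g$ improving flips available and, by definition, each improving $0 \to 1$ flip of the rank-$1$ individual $s^\star$ is realised in the offspring with probability at least $\mathbf{p}_1$ (the same bound holding for the $1 \to 0$ flips under a value-symmetric mutation operator), the expected decrease of $g$ in one generation is $\Omega(g\,\mathbf{p}_1)$. The multiplicative drift theorem then gives expected time $O(\log(n+m)/\mathbf{p}_1) = O(\log(n)/\mathbf{p}_1)$ to reach $g = 0$. Note that only $c$ and $e$ enter $g$, never the weights, which is exactly why this bound carries $\log n$ rather than $\log(n\wmax)$.

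\textbf{Main obstacle.} The delicate step is the drift estimate: turning the \emph{per-flip} guarantee $\mathbf{p}_1$ into a \emph{multiplicative} drift of rate $\Theta(\mathbf{p}_1)$. The events ``$s^\star$ with bit $i$ flipped occurs in the offspring'' for the $\ge g$ improving bits need not be independent, so one cannot simply union-bound; I would instead estimate the expected number of improving children produced (controlling the pairwise correlations, e.g.\ via a second-moment bound over the offspring population) and exploit that a single offspring under standard-bit mutation may execute several improving flips at once, which is what prevents a merely additive, and hence too weak, bound. A secondary subtlety is purely definitional: reaching an actual spanning tree, not just a connected graph, also needs the $1 \to 0$ deletions, whereas $\mathbf{p}_1$ is phrased only for $0$-bits; this is harmless for the concrete operators of Section~\ref{sec:applications}, where the single-flip probability is independent of the bit value, but should be made explicit.
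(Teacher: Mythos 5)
There is a genuine gap, and it is not the one you flag as the ``main obstacle''. Your potential $g = (c^\star-1) + (e(s^\star)-n+c^\star)$ is incompatible with the selection order that actually governs survival. The fitness $f$ orders individuals \emph{lexicographically} by (number of components, number of edges, weight), so what is non-increasing is the lexicographic pair $(c,e)$ of the minimum-$f_1$ individual, not your linear combination of surplus components and cycles. Concretely, the individual defining $g_{t+1}$ is the new lex-minimum of $P\cup Q$, and this need not be one of your single-flip children: a child created by flipping one component-joining $0$-bit \emph{together with} several cycle-creating $0$-bits has strictly smaller $f_1$ than $s^\star$ (the component term dominates), wins the lex race against the exact-single-flip child, and has strictly larger $g$. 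Hence the claim ``as soon as one appears in the offspring the tracked pair decreases and $g$ drops'' is false: $g$ can increase even in iterations where an improving single-flip child is produced. Worse, under standard bitwise mutation the probability of such lex-improving but $g$-increasing children is of the same order as that of the exact single flips (conditioned on flipping the joining bit, the expected number of additional flips is $\Theta(1)$), so the negative contributions to $\E[g_t-g_{t+1}]$ are not lower-order terms; the asserted multiplicative drift $\Omega(g\,\mathbf{p}_1)$ simply does not follow, and without further structural assumptions the drift could even be non-positive. The correlation issue you worry about is, by contrast, the part the paper also dispatches with a one-line disjointness remark (per child, the events ``differs in exactly bit $i$'' are disjoint across $i$).

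The paper avoids this trap by running \emph{two} phases with potentials that are each genuinely monotone under the lex order: first, all of $F_1$ shares the same component count $l$ (fewer components strictly dominates), this count never increases, and each level is left after expected time at most $1/((l-1)\mathbf{p}_1)$, giving a harmonic sum $O(\log n/\mathbf{p}_1)$; then, once $F_1$ consists of connected graphs, all of $F_1$ shares the same edge count, which is now non-increasing (since $c$ cannot drop below $1$), and the same level argument gives $O(\log(m-n+1)/\mathbf{p}_1)=O(\log n/\mathbf{p}_1)$. Your argument can be repaired exactly this way: do drift (or waiting times) on $c$ alone, and only after $c=1$ on the cyclomatic number alone. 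Two smaller remarks: your survival argument via infinite crowding distance needs $N\ge 4$ (at most four members of $F_1$ receive infinite crowding distance, and for smaller $N$ tie-breaking can evict the $f_1$-minimizer), whereas the paper tracks the whole first front, which shares a common $(c,e)$, and thus works for any $N\ge 1$; and your final definitional point is well taken --- the paper's own phase two also applies $\mathbf{p}_1$ to $1\to 0$ flips although $\mathbf{p}_1$ is defined only for $0$-bits.
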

The proof is essentially an adaptation of \cite[Lemma 5]{Neumann07}, the details of which are in the appendix of the Arvix version of this paper. We first bound the time before the population contains a connected graph: to do so, we observe that for a subgraph $H$ with $l \ge 2$ connected components, there are at least $l-1$ edges of $G$ that decrease $l$. This allows us to use $\mathbf{p}_1$ to give a lower bound for the probability that these specific edges are added, and thus that connected components merge fast enough. A similar argument is used for the second phase, to show that excessive edges are deleted at roughly the same speed.

\subsection{An Elitism Property of the \NSGA}
One of the observations that make the study of the GSEMO easier is that it has an elitism property: an individual will not disappear from the population unless it is replaced by a dominating one. Here, we introduce a lemma, inspired by \cite{ZhengD22gecco}, which shows that under a certain condition on the population size, the \NSGA has a similar property.

\begin{lemma} \label{lem:pop_size}
Let $P$ be a population such that $|P| > 4((n-1)w_{\min} + 1) $ and having at least one spanning tree. Let $P'$ be the next population. For each individual $s$ in $P$, there is an individual $s'$ in $P'$ such that $s' \preceq s$.
\end{lemma}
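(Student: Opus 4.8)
The plan is to show that the elitism condition fails only if the survivor‐selection step of the \NSGA discards every individual weakly dominating some fixed $s\in P$, and then to argue that the population size $N$ is too large for this to happen. First I would fix an individual $s\in P$ and consider the combined population $R=P\cup Q$ of size $2N$ in the next iteration. The key observation is that $s$ itself belongs to $R$, so there is always at least one candidate weakly dominating $s$ (namely $s$); the danger is only that all such candidates are cut. I would therefore study the rank $k$ of $s$ in $R$ and the set $D(s)=\{\,r\in R : r\preceq s\,\}$ of individuals weakly dominating $s$. Every element of $D(s)$ has rank at most $k$ in $R$. If any element of $D(s)$ has rank strictly less than $i_{cut}$, it survives automatically and we are done, so the only problematic case is when all weak dominators of $s$ lie in the critical front $F_{i_{cut}}$ and get truncated there.

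The core of the argument is then a counting bound on the front $F_{i_{cut}}$, exploiting the presence of a spanning tree in $P$. The crucial structural fact is that the fitness values $f_1,f_2$ place all spanning trees onto the ``lowest'' rank: since $P$ contains at least one spanning tree, rank~$1$ of $R$ consists only of individuals that are spanning trees (any non‑tree individual has a strictly larger penalty term $(c(s)-1)w_{\ub}^2+(e(s)-(n-1))w_{\ub}$ and is strictly dominated by some tree). The number of distinct fitness vectors attainable by spanning trees is bounded by the size of the relevant weight range: each coordinate $f_i$ of a spanning tree lies in $\{0,\dots,(n-1)w_i^{\max}\}$, and along a single Pareto front the number of distinct incomparable values is at most $(n-1)w_{\min}+1$. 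This is exactly where the threshold $4((n-1)w_{\min}+1)$ enters: I would show that rank~$1$ contains at most $(n-1)w_{\min}+1$ distinct objective values, so by the crowding‑distance tie‑breaking the \NSGA keeps, for each distinct fitness value present on the first front, at least one representative.

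The elitism conclusion then follows by combining these two facts. If $s$ is a spanning tree, its fitness value appears on rank~$1$ of $R$, and I would argue via the crowding‑distance selection that at least one individual with that exact fitness value (hence weakly dominating $s$) is retained in $P'$; the factor $4$ in the bound guarantees that even after accounting for the two infinite‑crowding‑distance extremes per objective and the doubled population from $R$, the budget $N$ exceeds the number of distinct first‑rank values so that no distinct value is ever completely eliminated. If $s$ is not a spanning tree, then $s$ is weakly dominated by some spanning tree $t\in P$ (which has strictly smaller penalty), and applying the previous case to $t$ yields a surviving $s'\preceq t\preceq s$.

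The main obstacle I anticipate is the second step: making precise that the first rank contains at most $(n-1)w_{\min}+1$ \emph{distinct} objective vectors and that the crowding‑distance mechanism never drops the last surviving copy of a distinct value. The subtlety is that crowding distance is computed per objective and summed, and ties are broken randomly, so I must ensure the counting argument is robust to how duplicates of a given fitness value are distributed; the clean way is to bound the number of distinct values on the front and then show that the selection budget $N$, being at least four times this count, leaves room to preserve a witness for every distinct value — which is precisely the role played by the constant $4$ and by bounding $|F|\le (n-1)w_{\min}+1$.
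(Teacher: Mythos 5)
Your overall strategy mirrors the paper's: reduce to the critical front, bound the number of distinct objective values on the first front by $(n-1)w_{\min}+1$ via the incomparable-set lemma, argue that the crowding-distance truncation keeps a representative of each distinct value, and handle all other individuals by domination through rank~$1$. However, there are two genuine gaps. The decisive one is the step you yourself flag as the ``main obstacle'': \emph{why} the truncation can never delete all copies of a distinct objective value. This is the actual content of the lemma, and your sketch does not supply it; moreover, your explanation of the constant $4$ (``the two infinite-crowding-distance extremes per objective and the doubled population from $R$'') is not the right reason, and a budget argument based on it would not close the gap. The correct argument, which is the paper's, is a local counting per value: sort $F_1$ separately by $f_1$ and by $f_2$; because $F_1$ is an incomparable set, all individuals sharing an $f_1$-value share their full fitness vector, so the copies of one fitness vector occupy a contiguous block in each sorted list; only the two endpoints of a block can receive a positive crowding-distance contribution from that objective, so at most $2+2=4$ individuals per distinct fitness vector have positive crowding distance, while an endpoint of a block witnesses that at least one such individual exists per value. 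Hence at most $4\,|w(F_1)| \le 4((n-1)w_{\min}+1) \le N$ members of $F_1$ have positive crowding distance; since any individual with positive crowding distance is preferred in the truncation to any with crowding distance zero, all of them survive, and every distinct value of $w(F_1)$ keeps a witness in $P'$.

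The second gap is the claim that ``if $s$ is a spanning tree, its fitness value appears on rank~$1$ of $R$.'' This is false: a spanning tree in $P$ can be strictly dominated by another spanning tree in $R$ (for instance a strictly better offspring), in which case $s$ has rank at least $2$ and no rank-$1$ individual need carry $s$'s fitness value. The statement you actually need is weaker and is what the paper uses: every individual of $R$ is weakly dominated by some rank-$1$ individual (by transitivity of $\preceq$), and since every rank-$1$ value retains a representative, that representative weakly dominates $s$ --- it just need not have the same fitness as $s$. Accordingly, your case split into tree versus non-tree (with the non-tree case reduced to the tree case) should be replaced by the split rank~$1$ versus rank~$>1$.
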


The proposed proof for this lemma requires to introduce the notion of incomparable set, which is used to bound the size of $F_1$.

\begin{definition}[Incomparable set]
    $A\subseteq S$ is an incomparable set if there is no pair of individuals $s, s' \in A$, such that $s' \prec s$.
\end{definition}

We now make the following observation regarding the size of incomparable sets.

\begin{lemma}
\label{lem:inc_set_val_bound}
Let $P$ be an incomparable set containing at least one spanning tree. Then the number of objective values of $P$ is bounded by $(n-1)\wmin + 1$. 
\end{lemma}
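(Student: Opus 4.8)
The plan is to first show that the incomparability assumption, combined with the heavy penalty structure of the fitness $f$, forces \emph{every} individual of $P$ to be a spanning tree; once this is established, the bound follows from an elementary counting argument on antichains of integer points.

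\textbf{Step 1: every individual of $P$ is a spanning tree.} This is the crux. Let $T \in P$ be a spanning tree, so that $c(T) = 1$, $e(T) = n-1$, and hence $f_i(T) = \sum_{j \mid T_j = 1} w_i(j) \le (n-1) w_i^{\max} \le (n-1)\wmax < w_{\ub}$ for both $i \in \{1,2\}$, using $w_{\ub} = n^2 \wmax$. I would then show that any $s \in P$ that is not a spanning tree satisfies $f_i(s) > f_i(T)$ for both $i$, so that $T \prec s$, contradicting incomparability. There are two cases. If $c(s) \ge 2$, the leading term gives $f_i(s) \ge w_{\ub}^2 - (n-1)w_{\ub}$, which exceeds $w_{\ub} > f_i(T)$ because $w_{\ub} = n^2\wmax > n$. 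If $c(s) = 1$ but $e(s) \ge n$ (a connected graph with a cycle), then $f_i(s) \ge (e(s)-(n-1))w_{\ub} \ge w_{\ub} > f_i(T)$. Since a connected graph on $n$ vertices has at least $n-1$ edges, the only remaining case $c(s)=1,\ e(s)=n-1$ is precisely a spanning tree. The delicate part is simply verifying that the penalty scales $w_{\ub}$ and $w_{\ub}^2$ genuinely dominate the weight term $\sum_{j \mid s_j = 1} w_i(j) \le m\wmax < w_{\ub}$ in all cases.

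\textbf{Step 2: count the distinct objective values.} For a spanning tree $s$ we have $f(s) = (W_1(s), W_2(s))$ with $W_i(s) = \sum_{j \mid s_j = 1} w_i(j) \in \{0, 1, \dots, (n-1)w_i^{\max}\}$. Let $A$ be the set of distinct objective values occurring in $P$; by incomparability $A$ is an antichain for $\preceq$. I would observe that two distinct antichain points cannot share a coordinate and must be anti-monotone: ordering $A$ by increasing first coordinate forces the first coordinate to be strictly increasing and the second strictly decreasing, since otherwise one point would dominate the other. Consequently $|A|$ is bounded by the number of attainable values of either coordinate, giving both $|A| \le (n-1)w_1^{\max}+1$ and $|A| \le (n-1)w_2^{\max}+1$. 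Taking the smaller bound yields $|A| \le (n-1)\min(w_1^{\max}, w_2^{\max}) + 1 = (n-1)\wmin + 1$, which is the claim.

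The main obstacle is Step 1: the reduction rests entirely on exploiting the three-tier magnitude separation built into $f$ (component penalty $\gg$ edge penalty $\gg$ weight), and on the fact that a single spanning tree in $P$ already dominates every non-tree individual. Once non-tree individuals are excluded, Step 2 is a routine antichain count, and the appearance of $\wmin$ rather than $\wmax$ is exactly the freedom to bound $|A|$ by whichever coordinate ranges over fewer values.
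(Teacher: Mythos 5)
Your proposal is correct and takes essentially the same route as the paper's proof: reduce to the case where $P$ contains only spanning trees (the paper asserts this as a design property of $f$, whereas you verify the penalty-term arithmetic explicitly), then use incomparability to argue that distinct objective values of $P$ have distinct $f_1$-values, bound the range of $f_1$ on spanning trees by $(n-1)w_1^{\max}$, and take the minimum over the two objectives. There is no gap; your Step~1 is simply a more detailed justification of what the paper treats as immediate from the construction of $f$.
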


\begin{proof}
Since $f$ is designed such that a spanning tree strictly dominates every non-spanningtree graph, $P$ consists of spanning trees only. Among two spanning trees of $P$ with the same $f_1$ value, one of them necessarily dominates the other. $P$ being an incomparable set, this does only happens if the two spanning trees have the same objective value. Thus, for a fixed value of $f_1$, there is at most one point in $f(P)$ having this $f_1$ value. Since a spanning tree has $n-1$ edges, and since by definition $0 \le w_1(e) \le w_{1}^{\max}$ for any edge $e$, we deduce that, for any spanning tree $s$, $0 \le f_1(s) \le (n-1)w_1^{\max}$.
It follows that $$|w(P)| \leq |f_1(ST)| \leq (n-1)w_{1}^{\max}+1,$$ where $ST$ is the set of elements of $S$ representing a spanning tree constructed from $G$.  We do the same with $f_2$, which gives $|w(P)|\le (n-1)w_{\min} + 1$, this bound being the minimum of the two bounds previously obtained.
\end{proof}

The subsequent formal proof for Lemma~\ref{lem:pop_size} is rather long, and may be found in appendix of the Arvix version of the paper. However, we give a sketch of the principal arguments: we observe that, by definition of the fronts, the lemma requires to be proven only for individuals of the first front. Then, we show that, for a given fitness value $p$, there are at most $4$ individuals that have fitness $p$ and positive crowding distance. Finally, we use Lemma~\ref{lem:inc_set_val_bound} to conclude that, because the \NSGA will first choose individuals with positive crowding distance, the size of the population is such that the new generation will represent all fitness values of the first front. 

\subsection{Sampling the Extremal Points of the Pareto Front}

This section gives upper bounds for the expected runtimes of the two algorithms, assuming the population contains at least a spanning tree, which completes the proof of Theorem~\ref{thm:main_nsga2} and~\ref{thm:main_gsemo}.

We build our analysis on the fact that the extremal points are the unique minimums for specific linear combinations of the weights $w_1$ and $w_2$.
The following lemma gives the combination for $q_i$, when $1 \le i \le r$. To state it more easily, we introduce the points $q_0 = q_1 + (0, 1)$ and $q_{r+1} = q_r + (1, 0)$. These points are chosen such that for $1 \le i \le r$, $w(S)$ lies entirely on one of the two half planes delimited by the lines $\overline{q_jq_{j+1}}$ and $\overline{q_{j-1}q_j}$

\begin{lemma}\label{lem:pointwiseDistance}
Let $i \in \{1,\ldots, r\}$. For all objective values~$p$, let 
\begin{equation*}
    \begin{split}
        d_i(p) = (w_1(q_{i-1}) - w_1(q_{i+1}))(w_2(p)-w_2(q_i)) \\ + (w_2(q_{i+1}) - w_2(q_{i-1}))(w_1(p)-w_1(q_i)).
    \end{split}
\end{equation*}
Then, for a given objective value $p = w(s)$ of some individual $s$, we have 
    $d_i(p) \geq 0$ and $d_i(p) = 0 \Leftrightarrow p=q_i$.
\end{lemma}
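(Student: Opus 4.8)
The plan is to read $d_i$ as an affine functional of $p$ and to identify its zero set with a supporting line of $\convF$ at the vertex $q_i$. Writing $p=(w_1(p),w_2(p))$ and $q_i=(w_1(q_i),w_2(q_i))$, the map $p\mapsto d_i(p)$ is affine with gradient $(w_2(q_{i+1})-w_2(q_{i-1}),\,w_1(q_{i-1})-w_1(q_{i+1}))$, which is exactly a vector normal to the chord $[q_{i-1},q_{i+1}]$. Hence $d_i(p)$ is, up to the positive factor $\|q_{i+1}-q_{i-1}\|$, the signed distance from $p$ to the line $L$ through $q_i$ parallel to $[q_{i-1},q_{i+1}]$. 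Substituting $p=q_i$ makes both summands vanish, so $q_i\in L$ and $d_i(q_i)=0$; this already gives the direction ``$\Leftarrow$'' of the equivalence.

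Next I would show that $L$ is a supporting line of $\convF$ meeting it only in $q_i$. Since $q_1,\dots,q_r$ are sorted by increasing $f_1$ and lie on the lower-left convex hull, their $f_2$-coordinates are strictly decreasing and the slopes of the successive edges $[q_j,q_{j+1}]$ are strictly increasing; this is precisely the convexity of $\convF$, i.e.\ the defining property of extremal points. Consequently $q_i$ lies strictly below the chord $[q_{i-1},q_{i+1}]$, the direction normal to that chord lies strictly inside the normal cone of $\convF$ at the vertex $q_i$, and therefore the parallel line $L$ touches $\convF$ in the single point $q_i$, with all of $\convF$ — hence all of the Pareto front $F$ — lying weakly on one fixed side of $L$.

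It then remains to transfer this to objective values. Every relevant search point is (or, by the elitism of Lemma~\ref{lem:pop_size}, may be taken to be) a spanning tree, and the weight $p=w(s)$ of any spanning tree is weakly dominated by some point of $F$, hence lies weakly above the front and thus in the closed half-plane bounded by $L$ that contains $\convF$. Pinning the orientation of $d_i$ by evaluating it at any single such point shows that $d_i(p)\ge 0$ throughout this half-plane, which is the claimed inequality. For ``$\Rightarrow$'', $d_i(p)=0$ forces $p\in L$; since $L\cap\convF=\{q_i\}$ and the front lies strictly above $L$ away from $q_i$, the only feasible $p$ on $L$ is $q_i$ itself, giving $p=q_i$.

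The step I expect to be the real obstacle is the supporting-line claim: turning ``$q_i$ is an extremal point'' into the quantitative statement that the chord-normal direction lies strictly in the normal cone at $q_i$ and that $L$ separates $q_i$ from the rest of the front. This is where the strict monotonicity of the edge slopes must be used carefully, and it is also where the endpoints $i=1$ and $i=r$ need attention: there $q_0$, respectively $q_{r+1}$, is undefined, so one must fix a convention (for instance replacing the missing neighbour by the relevant coordinate-axis direction together with a lexicographic tie-break), using that $q_1$ and $q_r$ are the unique extremal minimizers of $w_1$ and $w_2$.
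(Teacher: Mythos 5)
Your proposal is correct, and the underlying geometry is the same as in the paper's proof --- the zero set of $d_i$ is the line through $q_i$ whose normal is $v_{i-1}+v_i$, the sum of the inward normals of the two hull edges adjacent to $q_i$ --- but the two arguments are organized genuinely differently. The paper never reasons about a single supporting line: it notes that every objective value lies in the upper half-plane $H_j$ of \emph{every} hull edge, so $(p-q_i)\cdot v_{i-1}\ge 0$ and $(p-q_i)\cdot v_i\ge 0$ both hold, with equality exactly on the respective edge lines, and it simply adds these two inequalities; the equality case is then immediate, since two non-parallel lines meet only in $q_i$. This decomposition buys, at no cost, exactly the step you single out as the real obstacle: your claim that the chord normal lies strictly inside the normal cone at $q_i$ reduces, in the paper's formulation, to the triviality that $v_{i-1}$ and $v_i$ are not parallel. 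Your single-line route instead leans on strict monotonicity of the edge slopes, which works but needs the extra bookkeeping you describe. Conversely, you are more careful than the paper on two points it glosses over: you restrict the inequality to weights dominated by the front (spanning-tree weights) --- for arbitrary subgraphs the claim fails, e.g.\ for the empty graph --- and you flag the endpoints $i\in\{1,r\}$, for which the lemma's formula is undefined; the paper's proof silently fixes $i\in\{2,\dots,r-1\}$ and defines $d_1,d_r$ separately after the lemma. Two details to tighten in your sketch: for the direction $d_i(p)=0\Rightarrow p=q_i$ you need $L\cap(F+\mathbb{R}_{\ge 0}^2)=\{q_i\}$, not merely $L\cap\mathrm{conv}(F)=\{q_i\}$, which follows because the (correctly oriented) normal of $L$ has strictly positive coordinates, so $d_i$ strictly increases under domination; and your ``orientation pinning'' applied to the printed formula would in fact yield $d_i(p)\le 0$, since the lemma's definition has its signs flipped (it equals $-2(p-q_i)\cdot(v_{i-1}+v_i)$, whereas the paper's own proof bounds $(p-q_i)\cdot(v_{i-1}+v_i)$), so both you and the paper really prove the sign-corrected statement.
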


\begin{figure}[h]

\includegraphics[scale=0.28]{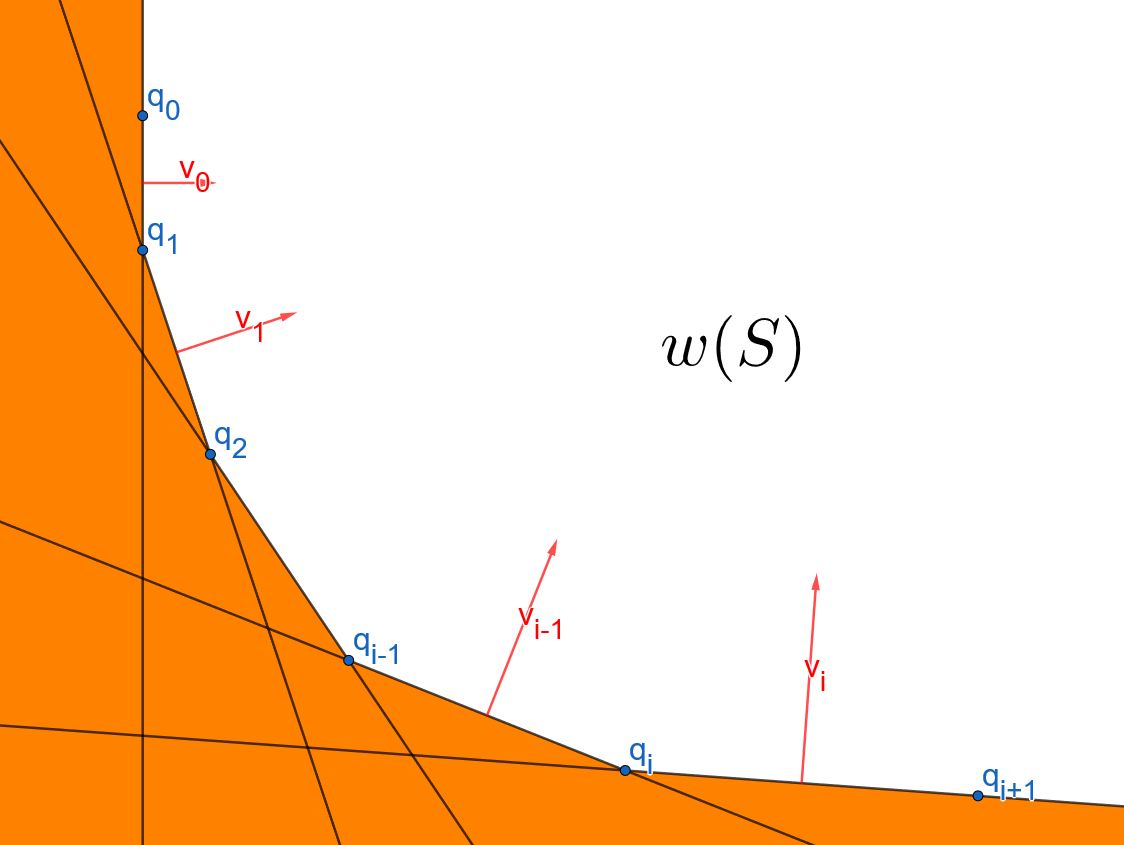}
\centering
\caption{Position of $w(S)$ relative to the lines $\overline{q_iq_{i+1}}$. Note that $q_0$ and $q_r$ may be on the Pareto front or out of $w(S)$, but that does not have any impact on our proof.}
\centering
\label{fig:intersection}
\end{figure}

\begin{proof}
    As shown in Figure~\ref{fig:intersection}, the set of all objective values is contained in the intersection of the upper-half planes $H_j$ of the lines $\overline{q_jq_{j+1}}, 0 \le j \le r$. For any such $j$, let 
\[  v_j = 
    \begin{pmatrix} 
    \frac{w_2(q_j) - w_2(q_{j+1})}{2} \\ \frac{w_1(q_{j+1}) - w_1(q_j)}{2}
    \end{pmatrix}.
\] 
    This vector is normal to the line $\overline{q_jq_{j+1}}$. Also, $w(S)$ lies entirely in one of the half planes delimited by $\overline{q_jq_{j+1}}$, that $v_j$ is pointing towards.
    Thus, for all $p \in \mathbb{R}^2, p \in H_j$ if and only if $ p \cdot v_j \ge q \cdot v_j$ for any point $q$ lying on $\overline{q_jq_{j+1}}$, with equality if and only if $p \in (q_j, q_{j+1})$.
    Now let us fix $i \in \{1,\ldots, r\}$. We know that $q_i$ lies on both $(q_i,q_{i+1})$ and $(q_{i-1},q_i)$, so, for all $p \in w(S), p \cdot v_i \ge q_i \cdot v_i$ and $p \cdot v_{i-1} \ge q_i \cdot v_{i-1}$, with equality if and only if $p \in (q_i,q_{i+1})$ or $p\in (q_{i-1},q_i)$, respectively.     
    By summing these two inequalities, we get $(p - q_i) \cdot (v_i + v_{i-1}) \ge 0$, that is $d_i(p) \ge 0$, with equality if and only if $p \in (q_i,q_{i+1}) \cap (q_i,q_{i-1})$. Since $q_i$ and $q_{i+1}$ are extremal points, these two lines cannot be parallel, hence $(q_i,q_{i+1}) \cap (q_iq_{i-1}) = \{q\}$, which concludes the proof.    
\end{proof}
We suppose from now on that the population $P$ contains a spanning tree, which is true after an expected number of $O(m \log n)$ iterations by Lemmas~\ref{lem:tree_time} and~\ref{lem:tree_timeNSGA}.
%

To prove the claimed upper bound, we use multiplicative drift analysis~\cite{DoerrJW12algo} on a quantity derived from these functions. A reminder for this technique may be found in appendix of the Arvix version of the paper.

To use the multiplicative drift theorem, we introduce a potential that has a multiplicative drift. 
For a given population~$P$, let 
\begin{align*}
d_i(P) &=  \min_{p \in w(P)} d_i(p),\\
d(P) &= \sum_{i=1}^r d_i(P).
\end{align*}

\begin{lemma}
\label{lem:d_noninc_nonneg}
Let $P(t)$ be the population of the \NSGA or the GSEMO algorithm after $t$ iterations. Then $d_i(P(t))$ for any $1 \le i \le r$, and $d(P(t))$ are nonnegative integers and  are non-increasing with respect to t.
\end{lemma}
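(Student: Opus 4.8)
The plan is to prove the three assertions in sequence, handling first the integrality, then the nonnegativity, and finally the monotonicity, which I expect to be the crux.

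\textbf{Integrality.} Each $d_i(P(t))$ is a minimum over finitely many objective values $p = w(s)$ of the quantity $d_i(p)$. For $i=1$ and $i=r$ these are simply differences of integer coordinates $w_1(p)-w_1(q_1)$ and $w_2(p)-w_2(q_r)$, hence integers. For $2 \le i \le r-1$, the formula in Lemma~\ref{lem:pointwiseDistance} is a sum of products of integer differences of the integer-valued weights $w_1, w_2$, so $d_i(p) \in \Z$. A finite minimum of integers is an integer, and $d(P(t)) = \sum_{i=1}^r d_i(P(t))$ is then a finite sum of integers.

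\textbf{Nonnegativity.} This follows directly from Lemma~\ref{lem:pointwiseDistance}, which gives $d_i(p) \ge 0$ for every objective value $p = w(s)$ with $2 \le i \le r-1$; since each member of the population is a search point whose weight is such a $p$, every $d_i(P(t))$ for these indices is a minimum of nonnegative numbers. For $i=1$ I would argue that $q_1$ realises the minimum of $w_1$ over $w(S)$ (noted in the preliminaries), so $w_1(p) \ge w_1(q_1)$ for all $p \in w(S)$, giving $d_1(p) \ge 0$; symmetrically $d_r(p) \ge 0$ since $q_r$ realises the minimum of $w_2$. Hence each $d_i(P(t)) \ge 0$ and their sum $d(P(t)) \ge 0$.

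\textbf{Monotonicity.} This is the main obstacle and the only part that uses the dynamics of the algorithms rather than the geometry. For the GSEMO the key is its elitism: an individual leaves the population only when replaced by a (weakly) dominating one. I would show that if $s' \preceq s$ then $d_i(w(s')) \le d_i(w(s))$ for every $i$; inspecting the formulas, $d_i$ is a nonnegative linear combination (the coefficients $w_1(q_{i-1})-w_1(q_{i+1})$ and $w_2(q_{i+1})-w_2(q_{i-1})$ are nonnegative because the $q_i$ are sorted in increasing $f_1$ and hence decreasing $f_2$ along the convex hull) of $w_1(p)$ and $w_2(p)$, so domination $w_1(s') \le w_1(s)$, $w_2(s') \le w_2(s)$ makes $d_i$ only decrease; the boundary cases $d_1, d_r$ are monotone in a single coordinate and follow identically. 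Consequently the minimiser of $d_i$ present in the population can never be lost without a replacement that is at least as good, so $d_i(P(t+1)) \le d_i(P(t))$, and summing gives the claim for $d$. For the \NSGA the same conclusion follows by invoking the elitism property of Lemma~\ref{lem:pop_size}: once $P$ contains a spanning tree and $|P|$ meets the size bound, every $s \in P(t)$ has some $s' \preceq s$ in $P(t+1)$, so $\min_{p \in w(P(t+1))} d_i(p) \le d_i(w(s')) \le d_i(w(s))$ for the $s$ realising the minimum at time $t$, yielding $d_i(P(t+1)) \le d_i(P(t))$ coordinatewise and hence for $d$. The one subtlety to state carefully is the sign of the coefficients of the linear form $d_i$, which is exactly what guarantees that Pareto-domination translates into a decrease of the potential.
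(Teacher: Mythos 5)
Your proposal is correct and takes essentially the same route as the paper: the paper's proof also picks $p_i(t) = \arg\min_{p \in w(P(t))} d_i(p)$, invokes Lemma~\ref{lem:pop_size} for the \NSGA (and the survival rule for the GSEMO) to obtain some $p_i' \in w(P(t+1))$ with $p_i' \preceq p_i(t)$, concludes $d_i(P(t+1)) \le d_i(p_i') \le d_i(p_i(t))$, takes nonnegativity from Lemma~\ref{lem:pointwiseDistance}, and sums over $i$. If anything you are more complete: the paper's proof never addresses integrality, does not treat $d_1$ and $d_r$ separately in the nonnegativity step, and never justifies the middle inequality $p' \preceq p \Rightarrow d_i(p') \le d_i(p)$, which you rightly isolate as the crux.

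One correction on that crux. With the extremal points sorted by increasing $f_1$ (hence decreasing $f_2$), the two differences you name, $w_1(q_{i-1}) - w_1(q_{i+1})$ and $w_2(q_{i+1}) - w_2(q_{i-1})$, are nonpositive, not nonnegative. The culprit is a sign typo in the paper's displayed definition of $d_i$: the proof of Lemma~\ref{lem:pointwiseDistance} actually derives $d_i(p) = 2(p - q_i)\cdot(v_{i-1}+v_i)$, whose coefficients in front of $w_1(p)$ and $w_2(p)$ are $w_2(q_{i-1}) - w_2(q_{i+1}) \ge 0$ and $w_1(q_{i+1}) - w_1(q_{i-1}) \ge 0$ respectively; the printed formula is the negative of this. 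Your argument is exactly right for the corrected formula (and the correction is forced: under the formula as printed one would have $d_i(p) \le 0$ on all of $w(S)$, contradicting Lemma~\ref{lem:pointwiseDistance} itself), so you should simply swap the coefficient labels; the structure of your monotonicity argument, and of the whole proof, is otherwise sound.
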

\begin{proof}
Let $p_i(t) = \arg\min_{p \in w(P(t))} d_i(p)$.
Using Lemma~\ref{lem:pop_size} in the case of the \NSGA, and by the definition in the case of the GSEMO, there exists an element $p_i' \in w(P(t+1))$ such that $p_i' \preceq p_i$. 
Hence, \[\min_{p' \in w(P(t+1))} d_i(p')\leq d_i(p_i') \leq d_i(p_i) = \min_{p \in w(P(t))}d_i(p),\]
so $d_i(P(t))$ is non-increasing. Lemma~\ref{lem:pointwiseDistance} gives that $d_i(P(t))$ is nonnegative. Summing $d_i$ for $1 \le i \le r$ gives the same two properties for $d(P(t))$.
\end{proof}

In order to apply drift analysis, we need another lemma given by \cite{NeumannW07}.
\begin {lemma}[\cite{NeumannW07}, Lemma~2] \label{lem:neumann_weight_decrease}
Let $\Tilde{w} \colon S \longrightarrow \mathbb{R}$ be any function which is a linear combination with nonnegative coefficients of $w_1$ and $w_2$. Let $s$ be a search point describing a spanning tree $T$ . Let $\Tilde{w}_{\texttt{opt}}$ be the minimum value taken by $\Tilde{w}$ on spanning trees. Then there exists a set of $n$ 2-bit flips resulting on new spanning trees, such that the average weight decrease of these flips is at least $\frac{\Tilde{w}(s) - \Tilde{w}_{\texttt{opt}}}{n}$ 
\end{lemma}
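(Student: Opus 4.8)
The plan is to derive the statement from the classical exchange property of spanning trees (bases of the graphic matroid) together with the optimality of a minimum spanning tree. First I would fix a spanning tree $T_{\mathrm{opt}}$ attaining $\tilde{w}_{\mathrm{opt}} = \tilde{w}(T_{\mathrm{opt}})$; since $\tilde{w}$ is a linear combination of $w_1$ and $w_2$ with nonnegative coefficients, it is itself a nonnegative weight function, so such a minimum spanning tree exists and the cut and cycle optimality conditions apply to it. Throughout I identify the search point $s$ with the tree $T$ it describes, so that $\tilde{w}(s) = \tilde{w}(T)$.

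The heart of the argument is a weight-monotone bijective exchange: I would show that there is a bijection $\sigma \colon (T \setminus T_{\mathrm{opt}}) \to (T_{\mathrm{opt}} \setminus T)$ such that, for every $e \in T \setminus T_{\mathrm{opt}}$, the edge set $(T \setminus \{e\}) \cup \{\sigma(e)\}$ is again a spanning tree and $\tilde{w}(\sigma(e)) \le \tilde{w}(e)$. Each such pair $(e,\sigma(e))$ is exactly a $2$-bit flip on $S = \{0,1\}^m$ (set the bit of $e$ from $1$ to $0$ and the bit of $\sigma(e)$ from $0$ to $1$) that produces a new spanning tree, with weight decrease $\tilde{w}(e) - \tilde{w}(\sigma(e)) \ge 0$.

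Given this bijection, the remaining steps are routine. Because $\sigma$ is a bijection onto $T_{\mathrm{opt}} \setminus T$ and the common edges $T \cap T_{\mathrm{opt}}$ contribute identically to both trees, the decreases telescope:
\[
\sum_{e \in T \setminus T_{\mathrm{opt}}} \bigl(\tilde{w}(e) - \tilde{w}(\sigma(e))\bigr) = \tilde{w}(T) - \tilde{w}(T_{\mathrm{opt}}) = \tilde{w}(s) - \tilde{w}_{\mathrm{opt}}.
\]
A spanning tree has $n-1$ edges, so there are $k := |T \setminus T_{\mathrm{opt}}| \le n-1 < n$ of these flips, each a valid $2$-bit flip to a spanning tree. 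Distributing the total progress $\tilde{w}(s) - \tilde{w}_{\mathrm{opt}}$ over $n$ flips (an upper bound on their number) shows that the average weight decrease of this set of flips is at least $\frac{\tilde{w}(s) - \tilde{w}_{\mathrm{opt}}}{n}$, as claimed.

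The main obstacle is establishing the weight-monotone exchange bijection, i.e. requiring the swaps to be simultaneously (i) \emph{valid}, so that each $(T \setminus \{e\}) \cup \{\sigma(e)\}$ is a tree, which is Brualdi's bijective basis-exchange theorem for matroids; and (ii) \emph{non-increasing} in $\tilde{w}$ edge by edge. For (ii) I would use that, for $e \in T \setminus T_{\mathrm{opt}}$, adding $e$ to $T_{\mathrm{opt}}$ creates a cycle on which $e$ is a maximum-weight edge (the cycle property of the minimum spanning tree $T_{\mathrm{opt}}$), so every $T_{\mathrm{opt}}$-edge on that cycle has weight at most $\tilde{w}(e)$. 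The delicate point is choosing $\sigma(e)$ on this cycle so that the overall pairing is a bijection realizing spanning trees in $T$, i.e. matching the cycle structure in $T_{\mathrm{opt}}$ with the fundamental-cut structure in $T$ at once. This is precisely the classical argument underlying \cite[Lemma~2]{NeumannW07}, and once it is in place the telescoping identity and the averaging above finish the proof.
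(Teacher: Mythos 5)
The paper itself gives no proof of this statement: it is imported verbatim, with citation, as Lemma~2 of \cite{NeumannW07}, so your proposal can only be compared against the classical exchange argument behind that citation --- which is indeed what you outline. Your telescoping and averaging steps are sound: for \emph{any} bijection $\sigma \colon T \setminus T_{\mathrm{opt}} \to T_{\mathrm{opt}} \setminus T$ whose swaps are valid (each $(T \setminus \{e\}) \cup \{\sigma(e)\}$ a spanning tree), the individual decreases sum to exactly $\tilde{w}(T) - \tilde{w}(T_{\mathrm{opt}})$, and since there are $k \le n-1$ such flips and the total is nonnegative, the average is at least $(\tilde{w}(s) - \tilde{w}_{\mathrm{opt}})/n$ as required.

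The one soft spot is precisely the step you flag as delicate, and as sketched it does not work: choosing $\sigma(e)$ on the fundamental cycle $C_{T_{\mathrm{opt}}}(e)$ guarantees that $T_{\mathrm{opt}} + e - \sigma(e)$ is a spanning tree, \emph{not} that $T - e + \sigma(e)$ is one. Validity requires $\sigma(e)$ to lie in the fundamental cut of $e$ with respect to $T$, which is a different bipartite adjacency structure from the cycle structure of $T_{\mathrm{opt}}$, and your sketch gives no argument that the two requirements (validity and $\tilde{w}(\sigma(e)) \le \tilde{w}(e)$) can be met simultaneously. The good news is that for the statement as quoted here you do not need requirement (ii) at all: Brualdi's bijective basis exchange alone suffices, because the telescoping identity holds for any valid bijection regardless of the signs of the individual differences $\tilde{w}(e) - \tilde{w}(\sigma(e))$, and the lemma only asserts a bound on the \emph{average} decrease, not edge-wise acceptance. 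Edge-wise monotonicity matters only in the original setting of \cite{NeumannW07}, where each 2-bit flip must individually be accepted by the \oea; it is in fact true (one can verify Hall's condition for the weight-constrained exchange graph using the cut property of the minimum spanning tree), but that is a genuinely nontrivial extra argument that your proposal asserts rather than supplies. Dropping requirement (ii) and invoking only the plain exchange bijection makes your proof complete and self-contained.
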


We now prove the following lemma. 

\begin{lemma}
The expected number of generations until the \NSGA (resp. GSEMO), working on the fitness function $f$ with a population of size $N \ge 4((n-1)\wmin + 1)$ and with any offspring generation mechanism (resp. its intrinsic mechanism), constructs a population $P$ such that $d(P) = 0$ is upper bounded by $O(\frac{\log (n\wmax)}{\mathbf{p}_2})$ (resp. $O(m^2\wmin\log (n\wmax))$). 
\end{lemma}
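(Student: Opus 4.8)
The plan is to control the nonnegative integer potential $d(P)=\sum_{i=1}^{r} d_i(P)$ by multiplicative drift analysis. By Lemma~\ref{lem:d_noninc_nonneg} this potential never increases, and it equals $0$ exactly when every extremal value $q_i$ is realized by some individual of $P$; so it suffices to bound the expected number of generations until $d$ hits $0$. The key structural observation is that, up to an additive constant, each $d_i$ is a \emph{nonnegative} linear combination $\tilde w_i=\alpha_i w_1+\beta_i w_2$ whose unique minimizer over spanning trees is $q_i$: this is the content of Lemma~\ref{lem:pointwiseDistance} together with the recalled fact that every extremal tree minimizes some $\lambda w_1+(1-\lambda)w_2$. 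Consequently $d_i(P)$ equals the optimality gap $\tilde w_i(s_i)-\tilde w_{i,\mathrm{opt}}$, where $s_i$ is a current $\tilde w_i$-minimal individual of $P$. Before anything else I would verify the signs of $\alpha_i,\beta_i$ from the ordering of $q_1,\dots,q_r$ by increasing $f_1$, to make sure $\tilde w_i$ is genuinely a nonnegative combination and hence eligible for Lemma~\ref{lem:neumann_weight_decrease}.

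Next I would extract per-generation progress. Fix a direction $i$ and let $s_i$ be a $\tilde w_i$-minimal individual of the current population $P$. Since $P$ contains a spanning tree and $f$ penalizes non-spanning-trees so heavily that any of them is strictly dominated, $s_i$ is itself a spanning tree; being the minimizer of a positive combination of the objectives it is non-dominated in $P$, hence lies in $F_1$, which is what makes the definitions of $\mathbf{p}_1,\mathbf{p}_2$ applicable. Lemma~\ref{lem:neumann_weight_decrease} then furnishes $n$ two-bit flips of $s_i$, each producing a new spanning tree and each flipping one $1$-bit and one $0$-bit (a pair of bits of different value), whose decreases $\delta_1^{(i)},\dots,\delta_n^{(i)}$ in $\tilde w_i$ satisfy $\sum_k \delta_k^{(i)}\ge d_i(P)$. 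For the \NSGA each such flip appears among the offspring with probability at least $\mathbf{p}_2$; for the GSEMO, choosing $s_i$ and flipping exactly those two bits happens with probability at least $\frac{1}{|P|}\cdot\frac{1}{e m^2}$. In either case, whenever a flip $k$ is realized the improved objective value is preserved into the next population---for the \NSGA via the large-population argument underlying Lemma~\ref{lem:pop_size}, and for the GSEMO directly from its acceptance rule---so that $d_i$ drops by at least $\delta_k^{(i)}$.

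It remains to assemble a multiplicative drift for $d=\sum_i d_i$. For the GSEMO I would bound the expected decrease of each $d_i$ separately and sum by linearity of expectation,
\[
\E[d(P)-d(P')]=\sum_{i=1}^{r}\E[d_i(P)-d_i(P')]\ \ge\ \sum_{i=1}^{r}\frac{1}{|P|}\sum_{k=1}^{n}\delta_k^{(i)}\,\frac{1}{e m^2}\ \ge\ \frac{d(P)}{e\,|P|\,m^2}.
\]
Since $P$ is an incomparable set containing a spanning tree, Lemma~\ref{lem:inc_set_val_bound} yields $|P|\le(n-1)\wmin+1$, hence a drift rate $\Omega\!\big(1/(m^2 n\wmin)\big)$. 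For the \NSGA I would instead, within each direction, sort the flips by decreasing $\delta_k^{(i)}$ and bound the expected decrease of $d_i$ by the gain of the \emph{best realized} flip; using that distinct flips arise from distinct, (negatively) associated offspring and that $n\mathbf{p}_2=o(1)$ (because $\mathbf{p}_2=O(1/m^2)$ and $m\ge n-1$), this gives $\E[d_i(P)-d_i(P')]\ge \mathbf{p}_2(1-\mathbf{p}_2)^{n}d_i(P)\ge \tfrac12\mathbf{p}_2 d_i(P)$, so a drift rate $\Omega(\mathbf{p}_2)$ for $d$ after summing. Finally, the weights being integral and at most $(n-1)\wmax$ on spanning trees makes $d$ an integer with initial value $d_0=\poly(n\wmax)$, whence $\ln d_0=O(\log(n\wmax))$; the multiplicative drift theorem then yields $O(\log(n\wmax)/\mathbf{p}_2)$ generations for the \NSGA and $O(m^2 n\wmin\log(n\wmax))$ fitness evaluations for the GSEMO, the latter matching Theorem~\ref{thm:main_gsemo}.

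The main obstacle is the \NSGA drift step. Two points need care. First, I must guarantee that a \emph{newly created} improving offspring actually survives truncation selection: the literal statement of Lemma~\ref{lem:pop_size} only preserves a dominating counterpart of each parent, so I would invoke the stronger fact from its proof, namely that with $N\ge 4((n-1)\wmin+1)$ every objective value of the first front of $P\cup Q$ is represented in the next population. Second, and more delicately, only the single best realized flip can lower a given $d_i$ in one generation, so a naive estimate loses a factor $n$ (producing rate $\mathbf{p}_2/n$ rather than $\mathbf{p}_2$); attaining the stated rate requires exploiting that the \NSGA produces many offspring \emph{in parallel} and that the at most $n$ relevant flip-events are almost independent with $n\mathbf{p}_2=o(1)$. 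This is precisely the ``progress in parallel'' phenomenon emphasized in the introduction, and making it rigorous for an arbitrary offspring mechanism---not merely for standard independent sampling---is where the real work lies; by contrast the GSEMO case is routine once Lemma~\ref{lem:inc_set_val_bound} bounds the population size.
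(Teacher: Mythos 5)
Your proposal follows the paper's own proof in every structural respect: the same potential $d(P)=\sum_{i=1}^r d_i(P)$, the same reduction of per-direction progress to the $n$ two-bit flips of Lemma~\ref{lem:neumann_weight_decrease}, the same double use of Lemma~\ref{lem:inc_set_val_bound} (to bound the GSEMO population, giving the extra factor of order $n\wmin$ in its drift rate, and to bound $r$ when estimating the initial potential), and the same conclusion via the multiplicative drift theorem with minimal positive value $1$ and $\log d_0 = O(\log(n\wmax))$. Two of your cautionary remarks are well placed: the sign check on the coefficients of $d_i$ is warranted (the coefficients in the statement of Lemma~\ref{lem:pointwiseDistance} are the negatives of those derived in its own proof), and the survival issue is real---the drift computation needs that every objective value of the first front of $P\cup Q$ is represented in the next population, which is exactly what the appendix proof of Lemma~\ref{lem:pop_size} establishes, combined, as you note, with the fact that anything dominating an improving child has no larger $d_i$ value.

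The one place you genuinely diverge is the \NSGA drift estimate, and there your patch fails while the paper's simpler route is sound under a reading you dismiss. The paper adds the contributions of the $n$ flips: writing $E_k$ for the event that some child of the selected parent $s_{i,t}$ differs from it in exactly the $k$-th pair of bits, and $\delta_k$ for the corresponding decrease, it bounds the expected decrease of $d_i$ below by $\sum_k \delta_k \Pr[E_k] \ge \mathbf{p}_2 \sum_k \delta_k \ge \mathbf{p}_2\, d_i(P)$. You object that only the best realized flip counts, so the sum should be a max; but for a fixed parent the events $E_k$ are \emph{mutually exclusive} whenever that parent contributes a single child (fair selection plus bitwise mutation, or the mutation branch of the crossover scheme of Section~\ref{sec:applications}), and for mutually exclusive events the max equals the sum almost surely, so the additive bound is exact and requires no independence whatsoever. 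Your replacement bound $\E[d_i(P)-d_i(P')]\ge \mathbf{p}_2(1-\mathbf{p}_2)^n d_i(P)$ has a concrete flaw: $\mathbf{p}_2$ is a \emph{lower} bound on the flip probabilities, so $(1-\mathbf{p}_2)^n$ does not bound from below the probability that no other (possibly far more likely) flip pre-empts the one you condition on; moreover, negative association of the offspring is nowhere among the hypotheses of Theorem~\ref{thm:main_nsga2}, which allows ``any offspring generation mechanism resulting in $\mathbf{p}_1$ and $\mathbf{p}_2$.'' To be fair, your worry does expose a real looseness of that theorem for mechanisms in which one parent begets many stochastically coupled children---there the additive step can overestimate the drift by a factor up to $n$---but your argument does not close that hole either; closing it needs a structural assumption (exclusivity or independence of the events $E_k$), which every mechanism actually analyzed in the paper satisfies.
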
 
\begin{proof}

For any $i \in \{1, \dots, r\}$ and any iteration  $t$, let $s_{i,t}$ be an individual such that $d_i(s_{i,t}) = d_i(P(t))$. Then, let $\mathbf{p}_{2, \emph{(i,t)}}$ be a lower bound over all pairs of bit positions of the probability that there exists, in the offspring, a child generated with $s_{(i,t)}$ as (one of) the parent(s) that differs from $s_{(i, t)}$ on exactly this pair of bits.
For the \NSGA algorithm, by definition of $\mathbf{p}_2$, we have $\mathbf{p}_{2, \emph{(i,t)}} \ge \mathbf{p}_2$.

For the GSEMO algorithm, if there is at least one spanning tree in  the population, the size of the population is upper-bounded by $(n-1)\wmin + 1$ by Lemma~\ref{lem:inc_set_val_bound}. Therefore, 
\begin{equation*}
\resizebox{.95\linewidth}{!}{$
    \mathbf{p}_{2, (i,t)} \ge \dfrac{1}{(n-1)\wmin + 1} \cdot  \dfrac{(1-\frac{1}{m})^{m-2}}{m^2} \ge \dfrac{1}{((n-1)\wmin + 1)em^2}$}.
\end{equation*}
We also denote $\mathbf{p}_2 = \frac{1}{(n-1)\wmin em^2}$ when analysing GSEMO, such that we have $\mathbf{p}_{2, \emph{(i,t)}} \ge \mathbf{p}_2$ for both algorithms.

We now apply drift analysis on $d(P(t))$.
Indeed, for  $i \in \{1,\ldots,r\}$, $ d_i(p)$ is a linear combination of $w_1$ and $w_2$ with non-negative coefficients minimum $ d_i^{\texttt{opt}} = 0$.

Focusing only on the set of $n$ 2-bits flips given by Lemma~\ref{lem:neumann_weight_decrease} on a search point being one of $\arg \min_{s \in P} d_i(s)$, and noticing that, since $d_i$ is non-increasing (Lemma~\ref{lem:d_noninc_nonneg}), all other bit flips contribute positively to the drift, we have
\[\mathbb{E}[d_i(P(t))-d_i(P(t+1)) |  d_i(P(t)) = x] \geq \tfrac xn \cdot n \cdot \mathbf{p}_{2, \emph{(i,t)}}.  \] Using this estimate, we show the following lower bound for the drift of our potential. $$\mathbb{E}\left[ d(P(t)) - d(P(t+1)) \mid d(P(t)) = x\right] \ge x \cdot \mathbf{p}_2.$$ 

Let $T = \min\{ t \in \N \mid d(P(t))=0 \}$. Since $d(P)$ is an non-negative integer, the minimum strictly positive value of $d(P)$ is $1$. Since each edge has a weight at most $2w_{\max}^2$,  $d(P) \leq \sum_{i=1}^r 2m\wmax^2 \leq 2rn^2\wmax^2$.
Finally, since the convex hull is an incomparable set, we have $r \leq (n-1)w_{\min}+1$ by Lemma~\ref{lem:inc_set_val_bound}. 
Hence we obtain
\begin{align*}
\max_t{d(P(t))} &\leq ((n-1)w_{\min}+1)mw_{\max} \\
&= O(n^3w_{\max}w_{\min}).
\end{align*}
Using multiplicative drift theorem, we obtain
\begin{align*}
\mathbb{E} (T) & \leq \dfrac{1}{\mathbf{p}_2}\log{\dfrac{\max_{t \in \N}{d(P(t))}}{1}} \\
&= O\left(\dfrac{\log{n} + \log{w_{\max}}}{\mathbf{p}_2}\right).
\end{align*}

We then conclude the proof by plugging in the value of $\mathbf{p}_2$ for GSEMO, which does only one fitness evaluation per iteration.
\end{proof}

Combining the last lemma with Lemma~\ref{lem:pointwiseDistance}, and noticing that \NSGA and GSEMO do respectively $N$ and $1$ fitness evaluation(s) per generation, we conclude the proof of theorems~\ref{thm:main_nsga2} and~\ref{thm:main_gsemo}.

\section{Study of Offspring Generation Mechanisms of the \NSGA}
\label{sec:applications}
This section is dedicated to demonstrating applications of Theorem~\ref{thm:main_nsga2} on the study of the influence of particular offspring generation mechanisms on the expected runtime of the \NSGA before finding the extremal points.

We start with a very rudimentary mechanism, composed of fair selection (selecting every individual $s$ in $P$ as a parent),  and standard bitwise mutation with constant $c$, $0 \le c \le m $ (mutating every bit of a given parent with probability $\frac{c}{m}$). Then we can define the lower bounds on the success probabilities as follows.
\begin{align*}
\mathbf{p}_1 &= \frac{c}{m}\left(1-\frac{c}{m}\right)^{m-1} \ge \frac{c}{e^cm},\\ 
\mathbf{p}_2 &= \frac{c^2}{m^2}\left(1 - \frac{c}{m}\right)^{m-2} \ge \frac{c^2}{e^cm^2}. 
\end{align*}
Theorem~\ref{thm:main_nsga2} then gives the following corollary.

\begin{corollary}
The expected time until the \NSGA, working on the fitness function $s$ with a population of size $N \ge 4((n-1)\wmin + 1)$ and using standard bitwise mutation and fair selection, constructs a population which includes a spanning tree for each extremal vector of \convF is upper bounded by $O(m^2\log (n\wmax))$ generations, and 
\[
O(m^2N\log (n\wmax))
\]
fitness evaluations.
\end{corollary}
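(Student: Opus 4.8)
The plan is to derive this statement as a direct instantiation of Theorem~\ref{thm:main_nsga2}. Since that theorem already reduces the runtime to the two mechanism-dependent quantities $\mathbf{p}_1$ and $\mathbf{p}_2$, the only work left is to compute valid lower bounds for these two probabilities for the concrete mechanism of fair selection together with standard bitwise mutation at rate $c/m$, and then to substitute and simplify.

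First I would pin down $\mathbf{p}_1$. Under fair selection every individual $s \in P$ is chosen as a parent and, since the offspring population has size $N = |P|$, produces exactly one child by flipping each of its $m$ bits independently with probability $c/m$. Hence the probability that the child generated from a fixed $s$ differs from $s$ on exactly the single prescribed bit $i$ is $\frac{c}{m}\bigl(1-\frac{c}{m}\bigr)^{m-1}$, which is a legitimate value for the lower bound $\mathbf{p}_1$ (here the event ``there exists such a child'' coincides with ``the unique child is of this form''). The analogous computation for a prescribed pair of differing bits $i,j$ gives $\mathbf{p}_2 = \frac{c^2}{m^2}\bigl(1-\frac{c}{m}\bigr)^{m-2}$. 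To turn these into clean asymptotics I would invoke the elementary inequalities $\bigl(1-\frac{c}{m}\bigr)^{m-1}\ge e^{-c}$ and $\bigl(1-\frac{c}{m}\bigr)^{m-2}\ge e^{-c}$, yielding $\mathbf{p}_1 \ge \frac{c}{e^c m} = \Theta(1/m)$ and $\mathbf{p}_2 \ge \frac{c^2}{e^c m^2} = \Theta(1/m^2)$, where $c$ is treated as a constant.

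Substituting these into the bound $O\bigl(\frac{\log n}{\mathbf{p}_1}+\frac{\log(n\wmax)}{\mathbf{p}_2}\bigr)$ of Theorem~\ref{thm:main_nsga2} produces $O\bigl(m\log n + m^2\log(n\wmax)\bigr)$ generations. I would then observe that the first summand is dominated by the second: the input graph is connected, so $m \ge n-1$, and $\log n \le \log(n\wmax)$, whence $m\log n = O\bigl(m^2\log(n\wmax)\bigr)$. This collapses the bound to $O\bigl(m^2\log(n\wmax)\bigr)$ generations. Finally, since the \NSGA performs $N$ fitness evaluations per generation (one for each member of the offspring population), multiplying by $N$ gives the stated $O\bigl(N m^2\log(n\wmax)\bigr)$ fitness evaluations.

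The conceptual content here is entirely carried by Theorem~\ref{thm:main_nsga2}, so no genuine obstacle remains; the only point requiring a little care is the lower bound $\bigl(1-\frac{c}{m}\bigr)^{m-1}\ge e^{-c}$, which is not valid uniformly in $c$ but holds in the relevant regime of constant $c$ and large $m$ (indeed $\bigl(1-\frac{c}{m}\bigr)^{m-1}\to e^{-c}$ from above as $m\to\infty$). Keeping $c$ fixed as a constant throughout is exactly what lets the $e^{c}$ and $1/c^2$ factors be absorbed into the $O$-notation.
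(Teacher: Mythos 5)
Your proposal is correct and essentially identical to the paper's own derivation: the paper likewise sets $\mathbf{p}_1 = \frac{c}{m}\left(1-\frac{c}{m}\right)^{m-1} \ge \frac{c}{e^c m}$ and $\mathbf{p}_2 = \frac{c^2}{m^2}\left(1-\frac{c}{m}\right)^{m-2} \ge \frac{c^2}{e^c m^2}$ for fair selection with rate-$c/m$ bitwise mutation, plugs these directly into Theorem~\ref{thm:main_nsga2}, absorbs the $m\log n$ term into $m^2\log(n\wmax)$, and multiplies by $N$ for the fitness-evaluation count. One small remark: your parenthetical claim that $\left(1-\frac{c}{m}\right)^{m-1}$ converges to $e^{-c}$ from above holds only for $c \le 2$, but since for any fixed constant $c$ the limit is a positive constant, the asymptotic bounds $\mathbf{p}_1=\Theta(1/m)$ and $\mathbf{p}_2=\Theta(1/m^2)$ hold regardless (the paper states the same inequality without any caveat).
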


With note that the same result would hold when replacing bitwise mutation with fast mutation~\cite{DoerrLMN17}, which proved to be advantageous in some multi-objective problems~\cite{DoerrZ21aaai}.

Theorem~\ref{thm:main_nsga2} also ensures that adding crossover with constant probability $0 < q < 1$ does not worsen the asymptotic complexity of the \NSGA. To be more precise, let us consider a random monoparental reproduction scheme $\mathcal{M}$, which, from an individual $s \in S$, generates an offspring $\mathcal{M}(s)$. From this operator, and a given crossover operator $\mathcal{C}$ we derive a biparental reproduction scheme $\mathcal{M^\prime}$, which, from a pair of individuals $s, s^\prime$, generates, with probability $q$, the children $\mathcal{C}(s, s^\prime)$ and $\mathcal{C}(s^\prime, s)$, and with probability $(1-q)$, the children $\mathcal{M}(s)$, and  $\mathcal{M}(s^\prime)$. Now, consider an offspring generation scheme which puts the selected parents into pairs and applies $\mathcal{M}^\prime$ to these pairs. Finally, note that if $\mathbf{p}_1$, $\mathbf{p}_2$ and $\mathbf{p}_1^\prime$, $\mathbf{p}_2^\prime$ are the pairs of probabilities defined in~\ref{thm:main_nsga2} for the offspring generation mechanism using $\mathcal{M}$ or  $\mathcal{M}^\prime$, then we have trivially $\mathbf{p}_1^\prime \ge (1-q)\mathbf{p}_1$ and $\mathbf{p}_2^\prime \ge (1-q)\mathbf{p}_2$. This gives the following corollary.

\begin{corollary}
    For a given \NSGA implementation, replacing $\mathcal{M}$ by $\mathcal{M^\prime}$, as described above (i.e. adding crossover with constant probability) does not worsen the asymptotic expected runtime (up to a multiplicative constant).
\end{corollary}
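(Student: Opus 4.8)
The plan is to invoke Theorem~\ref{thm:main_nsga2} for both reproduction schemes and compare the two resulting bounds, exploiting the inequalities $\mathbf{p}_1' \ge (1-q)\mathbf{p}_1$ and $\mathbf{p}_2' \ge (1-q)\mathbf{p}_2$ that are established just before the corollary. The first thing I would check is that $\mathcal{M}'$ is an admissible offspring generation mechanism in the sense required by Theorem~\ref{thm:main_nsga2}: its only hypothesis is the population-size condition $N \ge 4((n-1)\wmin+1)$, which is a property of the \NSGA implementation and is left untouched when we replace $\mathcal{M}$ by $\mathcal{M}'$. Hence the theorem applies verbatim to $\mathcal{M}'$ and bounds the expected number of generations by $O\left(\frac{\log n}{\mathbf{p}_1'} + \frac{\log(n\wmax)}{\mathbf{p}_2'}\right)$.

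Next I would substitute the two probability bounds. Since $0 < q < 1$ is a fixed constant, $\frac{1}{1-q}$ is a positive constant independent of $n$, $m$ and $\wmax$. From $\mathbf{p}_1' \ge (1-q)\mathbf{p}_1$ we get $\frac{1}{\mathbf{p}_1'} \le \frac{1}{1-q}\cdot\frac{1}{\mathbf{p}_1}$, and likewise $\frac{1}{\mathbf{p}_2'} \le \frac{1}{1-q}\cdot\frac{1}{\mathbf{p}_2}$. Because the bound of Theorem~\ref{thm:main_nsga2} is a sum of two terms, each inversely proportional to exactly one of the two probabilities, the common factor $\frac{1}{1-q}$ can be pulled out of the whole expression:
\[
O\left(\frac{\log n}{\mathbf{p}_1'} + \frac{\log(n\wmax)}{\mathbf{p}_2'}\right) \le \frac{1}{1-q}\, O\left(\frac{\log n}{\mathbf{p}_1} + \frac{\log(n\wmax)}{\mathbf{p}_2}\right).
\]
The right-hand side is precisely the bound Theorem~\ref{thm:main_nsga2} gives for the original mechanism $\mathcal{M}$, scaled by the constant $\frac{1}{1-q}$, so the two expected numbers of generations agree up to a multiplicative constant. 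As the number of fitness evaluations per generation ($N$) is identical for both mechanisms, the same conclusion transfers to fitness evaluations.

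I expect no genuine obstacle: the argument is essentially a one-line monotonicity-of-big-$O$ computation once the two inequalities are in hand. The only point deserving care is ensuring that the extracted constant really is independent of the instance parameters, which is exactly why the assumption that $q$ is bounded away from $1$ matters, since a probability $q$ tending to $1$ with the input size would let $\frac{1}{1-q}$ grow and could in principle alter the asymptotics. A secondary remark worth stating explicitly is that the supporting inequalities $\mathbf{p}_1' \ge (1-q)\mathbf{p}_1$ and $\mathbf{p}_2' \ge (1-q)\mathbf{p}_2$ rest on restricting to the event that $\mathcal{M}'$ takes its monoparental branch (probability $1-q$) and that the embedded operator $\mathcal{M}$ then produces the desired one- or two-bit-flip child with $s$ as a parent (probability at least $\mathbf{p}_1$, respectively $\mathbf{p}_2$); conditioning on this sub-event only underestimates the true probabilities, which is all a lower bound requires.
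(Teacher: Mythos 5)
Your proposal is correct and follows exactly the paper's route: the paper also derives the corollary by combining Theorem~\ref{thm:main_nsga2} with the inequalities $\mathbf{p}_1^\prime \ge (1-q)\mathbf{p}_1$ and $\mathbf{p}_2^\prime \ge (1-q)\mathbf{p}_2$ (obtained, as you note, by restricting to the event that $\mathcal{M}^\prime$ takes its monoparental branch), so the constant $q$ only contributes a factor $\frac{1}{1-q}$ absorbed by the $O(\cdot)$ notation. Your write-up is merely a more explicit version of the same argument, with the useful added remark that $q$ must be a constant bounded away from $1$ for the factor to remain instance-independent.
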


Finally, to demonstrate the relevance of the $\mathbf{p}_2$ parameter, let us introduce a problem-specific ``balanced'' mutation operator, described in \cite{NeumannW07}. For any tree individual $s \in S$ the mutation operator flips each 1-bit with probability $\frac{1}{n-1}$, and each 0-bit with probability $\frac{1}{m - n + 1}$, and applies standard bitwise mutation otherwise. We use fair selection for this example. By definition, the value of $\mathbf{p}_1$ remains unchanged compared to standard bitwise mutation, that is, $O(\frac{1}{m})$. However, one can show that $\mathbf{p}_2$ is increased from $O(\frac{1}{m^2})$ to $O(\frac{1}{(m-n)n})$ (see appendix of the Arvix version of the paper). Theorem~\ref{thm:main_nsga2} automatically gives:

\begin{corollary}
    The expected runtime of the \NSGA, working on the fitness function $f$, with $|P| > 4((n-1)\wmin + 1) $, using fair selection and balanced mutation is $O((m-n)n\log (n\wmax) + m \log n)$ generations and 
    \[
    O(N((m - n)n\log (n\wmax) + m\log n))
    \]
    fitness evaluations.
\end{corollary}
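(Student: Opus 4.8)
The plan is to apply Theorem~\ref{thm:main_nsga2} directly, so the entire task reduces to pinning down valid values of $\mathbf{p}_1$ and $\mathbf{p}_2$ for the combination of fair selection and balanced mutation, substituting them into the bound $O(\frac{\log n}{\mathbf{p}_1} + \frac{\log(n\wmax)}{\mathbf{p}_2})$, and finally multiplying by $N$ to pass from generations to fitness evaluations (since the \NSGA evaluates all $N$ offspring each generation). The population-size hypothesis $|P| > 4((n-1)\wmin+1)$ is exactly the condition required by the theorem, so no extra work is needed there. Under fair selection every individual of $P$, in particular every $s \in F_1$, is chosen as a parent, so a child is produced from each such $s$, and it suffices to bound the probability that this child realises the desired single- or double-bit flip.

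For $\mathbf{p}_1$, recall that this parameter only governs populations containing \emph{no} spanning tree. On such individuals the balanced operator is defined to coincide with standard bitwise mutation, so the analysis is identical to the standard-mutation corollary: the probability of flipping exactly one prescribed $0$-bit is $\frac{1}{m}(1-\frac{1}{m})^{m-1} \ge \frac{1}{em}$, whence we may take $\mathbf{p}_1 = \Omega(1/m)$ and $\frac{\log n}{\mathbf{p}_1} = O(m\log n)$.

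The real content lies in $\mathbf{p}_2$. First I would argue that once the population contains a spanning tree, the first front $F_1$ consists of spanning trees only: $F_1$ is an incomparable set, and by the domination structure of $f$ exploited in Lemma~\ref{lem:inc_set_val_bound} a spanning tree strictly dominates every non-spanning-tree graph, so any $s \in F_1$ is itself a spanning tree. Hence the tree-specific rates of the balanced operator apply: $s$ has exactly $n-1$ one-bits and $m-n+1$ zero-bits, flipped with probabilities $\frac{1}{n-1}$ and $\frac{1}{m-n+1}$ respectively. Since the prescribed pair $i,j$ has different values, one is a one-bit and the other a zero-bit, and the probability of flipping exactly this pair is
\[ \frac{1}{n-1}\cdot\frac{1}{m-n+1}\cdot\left(1-\frac{1}{n-1}\right)^{n-2}\left(1-\frac{1}{m-n+1}\right)^{m-n}. \]
Bounding each ``no-flip'' factor below by a constant via $(1-\frac{1}{k})^{k-1}\ge \frac{1}{e}$ yields $\mathbf{p}_2 = \Omega(\frac{1}{(m-n)n})$, so $\frac{\log(n\wmax)}{\mathbf{p}_2} = O((m-n)n\log(n\wmax))$.

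Substituting the two estimates into Theorem~\ref{thm:main_nsga2} gives the claimed $O((m-n)n\log(n\wmax)+m\log n)$ generations, and multiplying by $N$ gives the stated fitness-evaluation count. The main obstacle is the $\mathbf{p}_2$ computation: one must justify that $F_1$ is all spanning trees so that the tree-specific mutation rates are the ones in force, and then correctly separate the single intended one-bit/zero-bit pair from the product of $n-2$ and $m-n$ complementary ``no-flip'' events, taking care that the two factors $(1-\frac{1}{n-1})^{n-2}$ and $(1-\frac{1}{m-n+1})^{m-n}$ are each bounded away from zero uniformly in $n$ and $m$.
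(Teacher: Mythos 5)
Your proposal is correct and follows essentially the same route as the paper: instantiate Theorem~\ref{thm:main_nsga2} with $\mathbf{p}_1 = \Omega(1/m)$ (since individuals of $F_1$ in tree-free populations are non-trees and thus receive standard bitwise mutation) and $\mathbf{p}_2 = \Omega\left(\frac{1}{(m-n)n}\right)$ via the product $\frac{1}{n-1}\cdot\frac{1}{m-n+1}\cdot\left(1-\frac{1}{n-1}\right)^{n-2}\left(1-\frac{1}{m-n+1}\right)^{m-n} \ge \frac{1}{e^2(n-1)(m-n+1)}$, exactly as in the paper's appendix. Your explicit justification that $F_1$ consists solely of spanning trees once the population contains one (so the tree-specific mutation rates apply) is a point the paper's appendix leaves implicit, but it is the same underlying argument, not a different approach.
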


\section{Conclusion}

In this first mathematical runtime analysis of the \NSGA on a combinatorial optimization problem, we provided a general approach to proving runtime guarantees for MOEAs solving the bi-objective MST problem. For the global SEMO, this gave a bounds lower than the previously known ones by a factor of $\Omega(|F|)$. More interestingly, we could prove the same performance guarantees for the much more complex \NSGA. Our result applies to several variants of the \NSGA, including some that use crossover. As for the simple global SEMO algorithm, we obtain better guarantees when employing a balanced mutation operator (which supports the general belief that analyses on simple toy algorithms can nevertheless give useful hint for the use of more complex algorithms). 

Overall, this work indicate that mathematical runtime analyses for the \NSGA are possible also for combinatorial optimization problems. In this first work, we mostly concentrated on proving performance guarantees at all. For future work, it would be interesting to derive more insights on how to optimally use the \NSGA
on the particular problem (we only saw that balanced mutation is preferable). That such results are possible in principle is again indicated by the simpler works on artificial benchmarks, where, e.g., \cite{DoerrQ23tec} gave some indications on the right mutation rate. Clearly, runtime analyses of the \NSGA on other combinatorial optimization problems would also be desirable to put this research direction on a broader basis.


\section*{Acknowledgments}
This work was supported by a public grant as part of the Investissements d'avenir project, reference ANR-11-LABX-0056-LMH, LabEx LMH and a fellowship via the International Exchange Program of \'Ecole Polytechnique.

{\small
\bibliographystyle{named}
\bibliography{ich_master,alles_ea_master,rest}
}

\clearpage
\appendix

\section{Appendix}

\subsection{Illustration of \convF}
\begin{figure}[H]
    \centering
    \includegraphics[scale = 0.75]{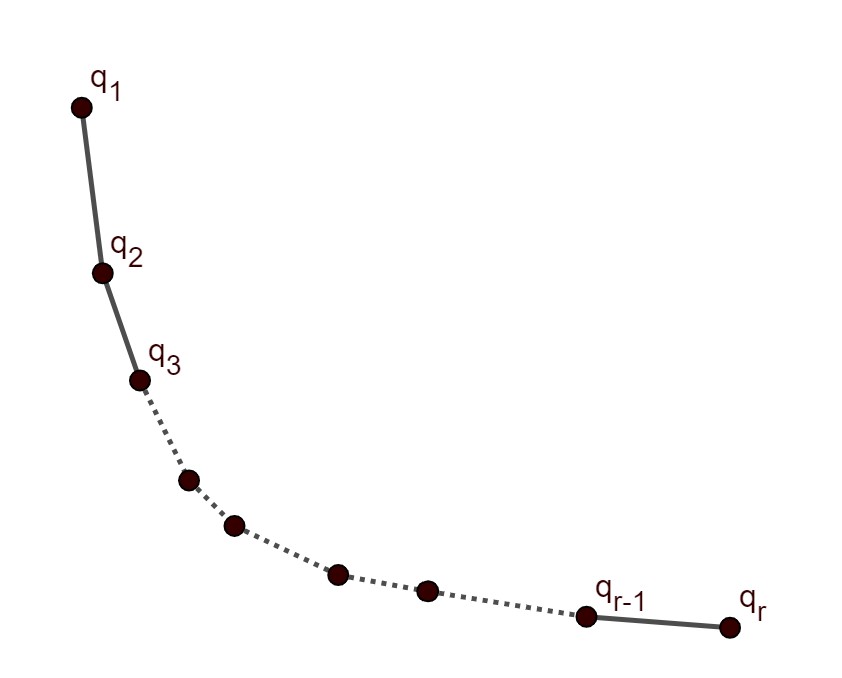}
    \caption{The lower-left part of the convex-hull of $F$. $w(S)$ lies at its top-right.}
    \vspace{20pt}
    \label{fig:convexhull}
\end{figure}
\vspace{10pt}

\subsection{Fast-Non-Dominated-Sort}

\begin{algorithm2e}[t]

\caption{fast-non-dominated-sort}\label{alg:fnds}
\Input{$X = \{x_1, \ldots, x_{|X|}\}$, the population}
\Output{$F_1, F_2, \ldots$ where $F_i$ is the set of individuals of rank $i$ }
\For{$i = 1$ \KwTo $|X|$}{
    $ND(x_i) = 0$ \tcp*{Number of open individuals dominating $x_i$}
    $XD(x_i) = \emptyset$  \tcp*{Individuals that $x_i$ dominates}
}
\For{$i = 1$ \KwTo $|X|$}{
    \For{$j = 1$ \KwTo $|X|$}{
        \If{$x_i \prec x_j$}{
            $ND(x_i) = ND(x_i) + 1$\\
            $XD(x_j) = XD(x_j) \cup \{x_i\}$
        }
    }
}
$F_1 = \{x_i \mid ND[i] = 0\}$\\ 
$k = 1$\\
\While{$F_k \neq \emptyset$}{
    $F_{k+1} = \emptyset$\\
    \For{$x \in F_k$}{
        \For{$x' \in XD(x)$}{
            $ND(x') = ND(x') - 1$\\
            \If{$ND(x') = 0$}{
                $F_{k+1} = F_{k+1} \cup {x'}$
            }
        }
    }
    $k = k+1$
}
\end{algorithm2e}

\subsection{Proof of Lemma~\ref{lem:tree_time}}

\begin{proof}
    First, we bound the expected time before the population contains a connected graph.
    Note that the fitness function is constructed in such a way that if $s, s' \in S$ are such that $c(s) < c(s')$, then $s \prec s'$. Thus, at any time, $F_1$ contains individuals all having the exact same number of components. Moreover, this number is non-increasing. Indeed, if $s$ is any individual in $F_1$ at iteration $t$, then any offspring having a bigger number of connected components than $s$ would be dominated by $s$, and thus would have rank at least $2$. Now, if at iteration $t$, $F_1$ consists of individuals with $l, 2 \le l \le n$ connected components, since $G$ is connected, there are for each individual at least $l - 1$ edges of $G$ whose inclusion reduces the number of components. Hence, by definition of $\mathbf{p}_1$ the probability that the number of connected components of individuals in $F_1$ decreases at iteration $t+1$ is lower bounded by $(l-1)\mathbf{p}_1$ (Note that the left term is the probability that at least one of the $l-1$ edges is added, but no other). Thus, the expected waiting time for a decrease is at most $\frac{1}{(l-1)\mathbf{p}_1}$. Hence, the expected time until the population consists only of connected graphs is upper bounded by $$\frac{1}{\mathbf{p}_1}\left(1 + \ldots + \frac{1}{n-1}\right) = O\left(\frac{\log n}{\mathbf{p}_1}\right).$$
    Now, we bound the expected time before the population contains a spanning tree, from the moment it contains a connected graph. From this point, we know that all individuals in $F_1$ are connected graphs. Also, still by construction of the fitness function, $F_1$ consists of individuals having the exact same number of edges, and we prove that this number is still non-increasing using the same argument as for connected components. If $F_1$ consists of non-spanningtree individuals with $N$ edges, $m \ge N \ge n$ , then for each individual in $F_1$, there are exactly $N-(n-1)$ edges whose exclusion decreases the number of edges without increasing the number of connected components. Using the same arguments as for connected graphs, we deduce that the expected time until $F_1$ contains spanning trees is at most $$O\left(\frac{\log(m-(n-1))}{\mathbf{p}_1}\right) = O\left(\frac{\log n}{\mathbf{p}_1}\right).$$ We get the claimed upper bound on the whole process simply by summing the two latter.\end{proof}

\subsection{Proof of Lemma~\ref{lem:pop_size}}
\begin{proof}
Let $S_{1.1}, . . . , S_{1.|F_1|}$ and $S_{2.1}, . . . , S_{2.|F_1|}$  be the population $F_1$ sorted  of $w_1$ and $w_2$ respectively. If $s \in F_i$, is at the position $k_1$ and $k_2$ in list $S_{1, .}$ and $S_{2, .}$ respectively, its crowding distance is positive if and only if one of the 4 following inequalities is true. 
\begin{itemize}
    \item $w_1(S_{1.k_1-1}) < w_1(S_{1.k_1})$
    \item $w_1(S_{1.k_1+1}) >  w_1(S_{1.k_1})$
    \item  $w_2(S_{2.k_2-1}) < w_2(S_{2.k_2})$
    \item$w_2(S_{2.k_2+1}) > w_2(S_{2.k_2}).$
\end{itemize}
However the lists $w_i(S_{i,.})$ are monotonic, meaning that for a fixed weight value $p_i \in \N$, the indices $k$ such that $w_i(S_{i, k}) = w$ form a contiguous segment of $\{0, \ldots, |F_1|\}$. Hence, only the two border indices of this segment satisfy one of the two conditions on $w_i$. We deduce that, for a fixed objective value $p \in \N^2$, there are at most $4$ individuals $s$ such that $w(s) = p$ having positive crowding distance.

Moreover, for a given objective value $(x, y) \in w(F_1)$,  there is at least one individual $s \in F_1$ such that $w(s) = (x, y)$ with a positive crowding distance. Indeed, since $F_1$ is incomparable, all individuals $s$ with $w_1(s)=x$ have an objective value $w_2(s) = y$, so if we denote $k_0$ the smallest index $k$ such that $w_1(S_{1, k}) = x$ in the sorted list, then $S_{1, k_0}$ is an individual satisfying these requirements. 
Let us now distinguish two cases. Now, let us distinguish two cases. 
\begin{itemize}
    \item If $|F_1| \le 4(n-1)\wmin + 1$, then, every individual in $F_1$ is selected for the next generation and so the lemma holds.
    \item if $|F_1| > 4(n-1)\wmin + 1$, first observe that $F_1$ consists only of spanning trees as it contains at least one spanning tree by assumption and a spanning tree dominates every non-spanning tree solution.
Moreover, $F_1$ is an incomparable set. Using Lemma~\ref{lem:inc_set_val_bound} it follows that $|w(F_1)| \le (n-1)w_{\min}$. 
But we know that for each $(x, y) \in w(F_1)$, there are at most 4 individuals $s$ with $cDist(s) > 0$ and $w(s) = (x, y)$. Thus, there are at most $4|w(F_1)| \leq 4((n-1)w_{\min} + 1) \leq |P|$ individuals in $F_1$ with positive crowding distance. By definition of the \NSGA algorithm, each individual from $F_1$ with positive crowding distance remains. Since there is at least one individual with positive crowding distance for each objective value, for any objective value $p \in w(F_1)$, there is an individual $s$ in the next iteration such that $w(s) = p$.
\end{itemize}

Now observe that for any $i > 1$, for any $s_i \in F_i$ by definition, there exists an $s_1 \in F_1$  such that $s_1 \preceq s_i$. So the individual in the next iteration which has the same objective value as $s_1$ will dominate $s_i$. This concludes the proof.
\end{proof}

\subsection{The Multiplicative Drift Theorem}
\begin{theorem}
[Multiplicative Drift Theorem \cite{DoerrN20}] \label{thm:mult_drift_analysis}
Let $(X_t)_{t \geq 0}$ be a sequence of non-negative random variables with a finite state space $S \subset \mathbb{R}^{+}$ such that $0 \in S$. Let $s_{min} = \min(S \backslash \{0\})$, let $T= \inf \{ t \geq 0 \mid X_t = 0 \}$ and, for $t \geq 0$ and $s \in S$, define  the drift $\Delta_t(s) = E[X_t - X_{t+1} \mid X_t = s]$. Suppose there exists $\delta > 0$ such that for all $s \in S \backslash \{0\}$ and all $t \geq 0$ the drift is 
$$\Delta_t(s) \geq \delta s.  $$
Then,
$$ E[T] \leq \frac{1+E[\log{(\frac{X_0}{s_{min}})}]}{\delta}.$$
\end{theorem}

\subsection{Lower bound for the drift of $d(P)$}

 We show the following lower bound for the drift of our potential. $$\mathbb{E}\left[ d(P(t)) - d(P(t+1)) \mid d(P(t)) = x\right] \ge x \cdot \mathbf{p}_2.$$
 \begin{proof}
Let $$q = \mathbf{P}[d_1(P(t)) = x_1,\ldots, d_r(P(t)) = x_r]$$ $$\Delta(t) = d(P(t)) - d(P(t+1))$$ $$\Delta_i(t) = d_i(P(t)) - d_i(P(t+1)).$$
Then, we have:

\begin{align*}
     \phantom{= }\mathbb{E}&\left[ \Delta(t) \mid d(P(t)) = x\right] = \mathbb{E}\left[ \mathbb{E}[\Delta(t) \mid  d(P(t)) = x\right] \mid d(P(t))=x]\\  
    = &\mathbb{E}\left[ \sum_{x_1+\ldots+x_r = x } (\sum_i \mathbb{E}[\Delta_i(t)  \mid d_i(P(t)) = x_i]) \cdot q \mid  d(P(t))=x \right] \\
    \geq &\mathbb{E}\left[ \sum_{x_1+\ldots+x_r = x } (\sum_i x_i \cdot \mathbf{p}_{2, (i,t)}) \cdot q\mid d(P(t))=x\right] \\
    \geq & \mathbb{E}\left[ \sum_{x_1+\ldots +x_r = x } x\cdot \mathbf{p}_2 \cdot q \mid d(P(t))=x\right] \\
    = &x \cdot \mathbf{p}_2.
\end{align*}

\subsection{$\mathbf{p}_2$-bound for balanced mutation and fair selection}

Let $s$ be any tree individual of the first front of any population $P$. Note that since $s$ is a tree, it has exactly $n-1$ edges, that is, $n-1$ 1-bits. Let $i, j \in \{1, \dots, m\}$ be the positions of a pair of bits of different values. Without loss of generality, we can suppose that the bit at position $i$ (respectively $j$) is a 1 (respectively a 0). Since we are using fair selection, $s$ will generate a child $s^\prime$ with probability $1$. Then, by definition of balanced mutation:
\begin{itemize}
    \item The probability that $s$ and $s^\prime$ differ on bit $i$ is $\frac{1}{n-1} = O(\frac{1}{n}).$
    \item The probability that $s$ and $s^\prime$ differ on bit $j$ is $\frac{1}{m-n+1} = O(\frac{1}{m-n}).$
    \item The probability that $s$ and $s^\prime$ are equal on all other bits is $(1 - \frac{1}{n-1})^{n-2}(1-\frac{1}{m-n+1})^{m-n} \ge \frac{1}{e^2} = O(1).$
\end{itemize}
By multiplying these three probabilities, we get the probability that $s$ and $s^\prime$ differ on exactly bits $i$ and $j$. Thus we can set $\mathbf{p}_2 = O(\frac{1}{n(m-n)})$
     
 \end{proof}

\end{document}


\appendix
\section{Appendix to the IJCAI 2023 Main Track Submission 5311}

\subsection{Illustration of \convF}
\begin{figure}[H]
    \centering
    \includegraphics[scale = 0.75]{convex_hull.jpg}
    \caption{The lower-left part of the convex-hull of $F$. $w(S)$ lies at its top-right.}
    \vspace{20pt}
    \label{fig:convexhull}
\end{figure}
\vspace{10pt}

\subsection{Fast-Non-Dominated-Sort}

\begin{algorithm2e}[H]

\caption{fast-non-dominated-sort}\label{alg:fnds}
Input: $X = \{x_1, \ldots, x_{|X|}\}$, the population\\
Output: $F_1, F_2, \ldots$, where $F_i$ is the set of individuals of rank $i$ \\
\For{$i = 1$ \KwTo $|X|$}{
    $ND(x_i) = 0$ \tcp*{Number of open individuals dominating $x_i$}
    $XD(x_i) = \emptyset$  \tcp*{Individuals that $x_i$ dominates}
}
\For{$i = 1$ \KwTo $|X|$}{
    \For{$j = 1$ \KwTo $|X|$}{
        \If{$x_i \prec x_j$}{
            $ND(x_i) = ND(x_i) + 1$\\
            $XD(x_j) = XD(x_j) \cup \{x_i\}$
        }
    }
}
$F_1 = \{x_i \mid ND[i] = 0\}$\\ 
$k = 1$\\
\While{$F_k \neq \emptyset$}{
    $F_{k+1} = \emptyset$\\
    \For{$x \in F_k$}{
        \For{$x' \in XD(x)$}{
            $ND(x') = ND(x') - 1$\\
            \If{$ND(x') = 0$}{
                $F_{k+1} = F_{k+1} \cup {x'}$
            }
        }
    }
    $k = k+1$
}
\end{algorithm2e}

\subsection{Proof of Lemma 9}

\begin{proof}
    First, we bound the expected time before the population contains a connected graph.
    Note that the fitness function is constructed in such a way that if $s, s' \in S$ are such that $c(s) < c(s')$, then $s \prec s'$. Thus, at any time, $F_1$ contains individuals all having the exact same number of components. Moreover, this number is non-increasing over time. Indeed, if $s$ is any individual in $F_1$ at iteration $t$, then any offspring having a bigger number of connected components than $s$ would be dominated by $s$, and thus would have rank at least $2$. 
    
    Assume that in some iteration $t$, the first front $F_1$ consists of individuals with $l, 2 \le l \le n$ connected components. Since $G$ is connected, for each individual there are at least $l - 1$ edges of $G$ whose inclusion reduces the number of components. Hence, by definition of $\mathbf{p}_1$ the probability that the number of connected components of individuals in $F_1$ decreases at iteration $t+1$ is at least $(l-1)\mathbf{p}_1$.
    Thus, the expected waiting time for a decrease is at most $\frac{1}{(l-1)\mathbf{p}_1}$. Hence, the expected time until the population consists only of connected graphs is upper bounded by $$\frac{1}{\mathbf{p}_1}\left(1 + \ldots + \frac{1}{n-1}\right) = O\left(\frac{\log n}{\mathbf{p}_1}\right).$$
    
    Now, we bound the expected time until the population contains a spanning tree, from the moment it contains a connected graph. From this point, we know that all individuals in $F_1$ are connected graphs. Again, by construction of the fitness function, $F_1$ consists of connected graph individuals having the exact same number of edges, and we prove that this number is non-increasing using the same argument as for connected components. If $F_1$ consists of individuals with $N$ edges, $m \ge N \ge n$ , then for each individual in $F_1$, there are exactly $N-(n-1)$ edges whose exclusion decreases the number of edges without increasing the number of connected components. Using the same arguments as for connected graphs, we deduce that the expected time until $F_1$ contains spanning trees is at most $$O\left(\frac{\log(m-(n-1))}{\mathbf{p}_1}\right) = O\left(\frac{\log n}{\mathbf{p}_1}\right).$$ 
    
    We obtain the claimed upper bound on the whole process simply by summing the two times just computed.\end{proof}

\subsection{Proof of Lemma 10}
\begin{proof}
Let $S_{1.1}, . . . , S_{1.|F_1|}$ and $S_{2.1}, . . . , S_{2.|F_1|}$  be the population $F_1$ sorted according $w_1$ and $w_2$ respectively. If $s \in F_1$ is at the position $k_1$ and $k_2$ in list $S_{1, .}$ and $S_{2, .}$ respectively, its crowding distance is positive if and only if one of the four following inequalities is true: 
\begin{itemize}
    \item $w_1(S_{1.k_1-1}) < w_1(S_{1.k_1})$,
    \item $w_1(S_{1.k_1+1}) >  w_1(S_{1.k_1})$,
    \item  $w_2(S_{2.k_2-1}) < w_2(S_{2.k_2})$,
    \item$w_2(S_{2.k_2+1}) > w_2(S_{2.k_2}).$
\end{itemize}
However the lists $w_i(S_{i,.})$ are monotonic, meaning that for a fixed weight value $p_i \in \N$, the indices $k$ such that $w_i(S_{i, k}) = p_i$ form a contiguous segment of $\{0, \ldots, |F_1|\}$. Hence, only the two border indices of this segment satisfy one of the two conditions on $w_i$. We deduce that, for a fixed objective value $p \in \N^2$, there are at most $4$ individuals $s$ with $w(s) = p$ and a positive crowding distance.

Moreover, for a given objective value $(x, y) \in w(F_1)$,  there is at least one individual $s \in F_1$ such that $w(s) = (x, y)$ with a positive crowding distance. Indeed, since $F_1$ is incomparable, all individuals $s$ with $w_1(s)=x$ have an objective value $w_2(s) = y$, so if we denote $k_0$ the smallest index $k$ such that $w_1(S_{1, k}) = x$ in the sorted list, then $S_{1, k_0}$ is an individual satisfying these requirements. 
Let us now distinguish two cases. Now, let us distinguish two cases: 
\begin{itemize}
    \item If $|F_1| \le 4(n-1)\wmin + 1$, then, every individual in $F_1$ is selected for the next generation and so the lemma holds.
    \item if $|F_1| > 4(n-1)\wmin + 1$, first observe that $F_1$ consists only of spanning trees as it contains at least one spanning tree by assumption and a spanning tree dominates every non-spanning tree solution.
Moreover, $F_1$ is an incomparable set. Using Lemma 12 it follows that $|w(F_1)| \le (n-1)w_{\min}$. 
But we know that for each $(x, y) \in w(F_1)$, there are at most 4 individuals $s$ with $cDist(s) > 0$ and $w(s) = (x, y)$. Thus, there are at most $4|w(F_1)| \leq 4((n-1)w_{\min} + 1) \leq |P|$ individuals in $F_1$ with positive crowding distance. By definition of the \NSGA algorithm, each individual from $F_1$ with positive crowding distance remains. Since there is at least one individual with positive crowding distance for each objective value, for any objective value $p \in w(F_1)$, there is an individual $s$ in the next iteration such that $w(s) = p$.
\end{itemize}

Now observe that for any $i > 1$, for any $s_i \in F_i$ by definition, there exists an $s_1 \in F_1$  such that $s_1 \preceq s_i$. So the individual in the next iteration which has the same objective value as $s_1$ will dominate $s_i$. This concludes the proof.
\end{proof}

\subsection{The Multiplicative Drift Theorem}
\begin{theorem}
[Multiplicative Drift Theorem \cite{DoerrJW12algo}] \label{thm:mult_drift_analysis}
Let $(X_t)_{t \geq 0}$ be a sequence of non-negative random variables with a finite state space $S \subset \mathbb{R}^{+}$ such that $0 \in S$. Let $s_{min} = \min(S \backslash \{0\})$, let $T= \inf \{ t \geq 0 \mid X_t = 0 \}$ and, for $t \geq 0$ and $s \in S$, define  the drift $\Delta_t(s) = E[X_t - X_{t+1} \mid X_t = s]$. Suppose there exists $\delta > 0$ such that for all $s \in S \backslash \{0\}$ and all $t \geq 0$ the drift is 
$$\Delta_t(s) \geq \delta s.  $$
Then
$$ E[T] \leq \frac{1+E[\log{(\frac{X_0}{s_{min}})}]}{\delta}.$$
\end{theorem}

\subsection{Lower bound for the drift of $d(P)$}

 We show the following lower bound for the drift of our potential: $$\mathbb{E}\left[ d(P(t)) - d(P(t+1)) \mid d(P(t)) = x\right] \ge x \cdot \mathbf{p}_2.$$
 \begin{proof}
Let $$q = \mathbf{P}[d_1(P(t)) = x_1,\ldots, d_r(P(t)) = x_r]$$ $$\Delta(t) = d(P(t)) - d(P(t+1))$$ $$\Delta_i(t) = d_i(P(t)) - d_i(P(t+1)).$$
Then, summing over all $x_1, \dots, x_r$ such that $x_1+\ldots+x_r = x$, we compute
\begin{align*}
     \mathbb{E}&\left[ \Delta(t) \mid d(P(t)) = x\right] \\
     =& \mathbb{E}\left[ \mathbb{E}[\Delta(t) \mid  d(P(t)) = x\right] \mid d(P(t))=x]\\  
    = &\mathbb{E}\left[ \sum_{x_i} \left( \sum_i \mathbb{E}[\Delta_i(t)  \mid d_i(P(t)) = x_i]\right) \cdot q \bigm\vert  d(P(t))=x \right] \\
    \geq &\mathbb{E}\left[ \sum_{x_i} \left(\sum_i x_i \cdot \mathbf{p}_{2, (i,t)}\right) \cdot q \bigm\vert d(P(t))=x\right] \\
    \geq & \mathbb{E}\left[ \sum_{x_i} x\cdot \mathbf{p}_2 \cdot q \bigm\vert d(P(t))=x\right] \\
    = &x \cdot \mathbf{p}_2.
\end{align*}
\end{proof}

\subsection{$\mathbf{p}_2$-bound for Balanced Mutation and Fair Selection}

Let $s$ be any tree individual of the first front of any population $P$. Note that since $s$ is a tree, it has exactly $n-1$ edges, that is, $n-1$ 1-bits. Let $i, j \in \{1, \dots m\}$ be the positions of a pair of bits of different values. Without loss of generality, we can suppose that the bit at position $i$ (respectively $j$) is a 1 (respectively a 0). Since we are using fair selection, $s$ will generate a child $s^\prime$ with probability $1$. Then, by definition of the balanced mutation operator:
\begin{itemize}
    \item The probability that $s$ and $s^\prime$ differ on bit $i$ is $\frac{1}{n-1} = O(\frac{1}{n}).$
    \item The probability that $s$ and $s^\prime$ differ on bit $j$ is $\frac{1}{m-n+1} = O(\frac{1}{m-n}).$
    \item The probability that $s$ and $s^\prime$ are equal on all other bits is $(1 - \frac{1}{n-1})^{n-2}(1-\frac{1}{m-n+1})^{m-n} \ge \frac{1}{e^2} = O(1).$
\end{itemize}
By multiplying these three probabilities, we get the probability that $s$ and $s^\prime$ differ on exactly bits $i$ and $j$. Thus we can set $\mathbf{p}_2 = O(\frac{1}{n(m-n)})$.
     
 \bibliographystyle{named}
\bibliography{ich_master,alles_ea_master,rest}